\newtheorem{theorem}{Theorem}[section]
\newtheorem{lemma}{Lemma}[section]
\DeclarePairedDelimiter\abs{\lvert}{\rvert}%
\newcommand{\half}{\frac{1}{2}}
\newcommand{\KL}[2]{D_{KL}(#1||#2)}
\newcommand{\TV}[2]{D_{TV}(#1||#2)}
\newcommand{\argmax}[1]{\underset{#1}{\textrm{argmax}}}
\newcommand{\argmin}[1]{\underset{#1}{\textrm{argmin}}}
\newcommand{\eps}{\epsilon}
\newcommand{\Rmax}{r_\mathrm{max}}
\newcommand{\datapi}{{\pi_D}}
\newcommand{\datapit}{\pi_{D,t}}
\newcommand{\newpi}{\pi}
\newcommand{\newpit}{{\pi_t}}
\newcommand{\pre}{\mathrm{pre}}
\newcommand{\post}{\mathrm{post}}
\newcommand{\etabranch}{{\eta^\mathrm{branch}}}
\newcommand{\etaoldnew}{\eta^{\datapi, \newpi}}
\newcommand{\etaoldold}{\eta^{\datapi, \hat{\datapi}}}
\definecolor{MyDarkBlue}{rgb}{0,0.08,1}
\definecolor{MyDarkGreen}{rgb}{0.02,0.6,0.02}
\definecolor{MyDarkRed}{rgb}{0.8,0.02,0.02}
\definecolor{MyDarkOrange}{rgb}{0.40,0.2,0.02}
\definecolor{MyPurple}{RGB}{111,0,255}
\definecolor{MyRed}{rgb}{1.0,0.0,0.0}
\definecolor{MyGold}{rgb}{0.75,0.6,0.12}
\definecolor{MyDarkgray}{rgb}{0.66, 0.66, 0.66}
\definecolor{mydarkblue}{rgb}{0,0.08,0.45}
\title{When to Trust Your Model: \\Model-Based Policy Optimization}
\author{%
  Michael Janner ~~~~~~ 
  Justin Fu  ~~~~~~
  Marvin Zhang  ~~~~~~ 
  Sergey Levine \\
  University of California, Berkeley \\
  \texttt{\{janner, justinjfu, marvin, svlevine\}@eecs.berkeley.edu} \\
}
\begin{document}

\maketitle

\begin{abstract}
Designing effective model-based reinforcement learning algorithms is difficult because the ease of data generation must be weighed against the bias of model-generated data.
In this paper, we study the role of model usage in policy optimization both theoretically and empirically. 
We first formulate and analyze a model-based reinforcement learning algorithm with a guarantee of monotonic improvement at each step.
In practice, this analysis is overly pessimistic and suggests that real off-policy data is always preferable to model-generated on-policy data, but we show that an empirical estimate of model generalization can be incorporated into such analysis to justify model usage.
Motivated by this analysis, we then demonstrate that a simple procedure of using short model-generated rollouts branched from real data has the benefits of more complicated model-based algorithms without the usual pitfalls.
In particular, this approach surpasses the sample efficiency of prior model-based methods, matches the asymptotic performance of the best model-free algorithms, and scales to horizons that cause other model-based methods to fail entirely.
\end{abstract}

\section{Introduction}
\label{sec:intro}

Reinforcement learning algorithms generally fall into one of two categories: model-based approaches, which build a predictive model of an environment and derive a controller from it, and model-free techniques, which learn a direct mapping from states to actions. 
Model-free methods have shown promise as a general-purpose tool for learning complex policies from raw state inputs
\citep{mnih2015humanlevel,lillicrap2016continuous,haarnoja18sac},
but their generality comes at the cost of efficiency. 
When dealing with real-world physical systems, for which data collection can be an arduous process, model-based approaches are appealing due to their comparatively fast learning. 
However, model accuracy acts as a bottleneck to policy quality, often causing model-based approaches to perform worse asymptotically than their model-free counterparts.

In this paper, we study how to most effectively use a predictive model for policy optimization. 
We first formulate and analyze a class of model-based reinforcement learning algorithms with improvement guarantees. 
Although there has been recent interest in monotonic improvement of model-based reinforcement learning algorithms \citep{sun2018dual,luo2018algorithmic}, most commonly used model-based approaches lack the improvement guarantees that underpin many model-free methods \citep{schulman2015trpo}.
While it is possible to apply analogous techniques to the study of model-based methods to achieve similar guarantees, it is more difficult to use such analysis to justify model usage in the first place due to pessimistic bounds on model error.
However, we show that more realistic model error rates derived empirically allow us to modify this analysis to provide a more reasonable tradeoff on model usage. 

Our main contribution is a practical algorithm built on these insights, which we call model-based policy optimization (MBPO), that makes limited use of a predictive model to achieve pronounced improvements in performance compared to other model-based approaches.
More specifically, we disentangle the task horizon and model horizon by querying the model only for short rollouts. 
We empirically demonstrate that a large amount of these short model-generated rollouts can allow a policy optimization algorithm to learn substantially faster than recent model-based alternatives while retaining the asymptotic performance of the most competitive model-free algorithms. 
We also show that MBPO does not suffer from the same pitfalls as prior model-based approaches, avoiding model exploitation and failure on long-horizon tasks.
Finally, we empirically investigate different strategies for model usage, supporting the conclusion that careful use of short model-based rollouts provides the most benefit to a reinforcement learning algorithm.

\vspace{-0.05in}
\section{Related work}
\label{sec:related}
\vspace{-0.05in}

Model-based reinforcement learning methods are promising candidates for real-world sequential decision-making problems due to their data efficiency \citep{kaelbling}. 
Gaussian processes and time-varying linear dynamical systems provide excellent performance in the low-data regime \citep{deisenroth2011pilco,levine2013gps,kumar2016optimal}.
Neural network predictive models  \citep{draeger1995ph,gal2016improving,depeweg2016learning,nagabandi2018mbmf}, are appealing because they allow for algorithms that combine the sample efficiency of a model-based approach with the asymptotic performance of high-capacity function approximators, even in domains with high-dimensional observations \citep{oh2015vpatari,ebert2018vf,kaiser2019mbatari}. 
Our work uses an ensemble of probabilistic networks, as in \cite{chua2018pets}, although our model is employed to learn a policy rather than in the context 
of a receding-horizon planning routine. 

Learned models may be incorporated into otherwise model-free methods for improvements in data efficiency. 
For example, a model-free policy can be used as an action proposal distribution within a model-based planner \citep{piche2019smc}. Conversely, model rollouts may be used to provide extra training examples for a Q-function \citep{sutton1990dyna}, to improve the target value estimates of existing data points \citep{feinberg2018mve}, or to provide additional context to a policy \citep{du2019dynprior}. 
However, the performance of such approaches rapidly degrades with increasing model error \citep{gu2016mba}, motivating work that interpolates between different rollout lengths \citep{buckman2018steve}, tunes the ratio of real to model-generated data \citep{kalweit2017uncertainty}, or does not rely on model predictions \citep{heess2015svg}.
Our approach similarly tunes model usage during policy optimization, but we show that justifying non-negligible model usage during most points in training requires consideration of the model's ability to generalize outside of its training distribution. 

Prior methods have also explored incorporating computation that resembles model-based planning but without constraining the intermediate predictions of the planner to match plausible environment observations \citep{tamar2016vin,racaniere2017i2a,oh2017vpn,silver2017predictron}. 
While such methods can reach asymptotic performance on par with model-free approaches, they may not benefit from the sample efficiency of model-based methods as they forgo the extra supervision used in standard model-based methods.

The bottleneck in scaling model-based approaches to complex tasks often lies in learning reliable predictive models of high-dimensional dynamics \citep{atkeson1997learning}. 
While ground-truth models are most effective when queried for long horizons \citep{holland2018effect}, inaccuracies in learned models tend to make long rollouts unreliable.
Ensembles have shown to be effective in preventing a policy or planning procedure from exploiting such inaccuracies \citep{rajeswaran2016epopt,kurutach2018modelensemble,clavera2018model,chua2018pets}.
Alternatively, a model may also be trained on its own outputs to avoid compounding error from multi-step predictions \citep{talvitie2014hallcuinated,talvitie2016selfcorrecting} or predict many timesteps into the future \citep{whitney2018understanding}. 
We demonstrate that a combination of model
ensembles with short model rollouts is sufficient to prevent 
model exploitation.

Theoretical analysis of model-based reinforcement learning algorithms has been considered by \cite{sun2018dual} and \cite{luo2018algorithmic}, who bound the discrepancy between returns under a model and those in the real environment of interest.
Their approaches enforce a trust region around a reference policy, whereas we do not constrain the policy but instead consider rollout length based on estimated model generalization capacity.
Alternate analyses have been carried out by incorporating the structure of the value function into the model learning \citep{farahmand2017valueaware} or by regularizing the model by controlling its Lipschitz constant \citep{asadi2018lipschitz}. Prior work has also constructed complexity bounds for model-based approaches in the tabular setting \citep{szita2010complexity} and for the linear quadratic regulator \citep{dean2017lqr}, whereas we consider general non-linear systems.

\section{Background}

We consider a Markov decision process (MDP), defined by the tuple $(\mathcal{S}, \mathcal{A}, p, r, \gamma, \rho_0)$. $\mathcal{S}$ and $\mathcal{A}$ are the state and action spaces, respectively, and $\gamma \in (0, 1)$ is the discount factor. The dynamics or transition distribution are denoted as $p(s'|s,a)$, the initial state distribution as $\rho_0(s)$, and the reward function as $r(s,a)$. The goal of reinforcement learning is to find the optimal policy $\pi^*$ that maximizes the expected sum of discounted rewards, denoted by $\eta$:
\[
\pi^* = 
\argmax{\pi}\ \eta[\pi] = \argmax{\pi}\
	\,E_{\pi}\left[\sum_{t=0}^\infty \gamma^{t} r(s_t, a_t)\right]
    \,.
\]
The dynamics $p(s'|s,a)$ are assumed to be unknown. Model-based reinforcement learning methods aim to construct a model of the transition distribution, $p_\theta(s'|s,a)$, using data collected from interaction with the MDP, typically using supervised learning. We additionally assume that the reward function has unknown form, and predict $r$ as a learned function of $s$ and $a$. 

\begin{algorithm}[t]
\caption{Monotonic Model-Based Policy Optimization}
\label{alg:mbpo}
\begin{algorithmic}[1]
    \STATE Initialize policy $\pi(a|s)$, predictive model $p_\theta(s',r|s,a)$, empty dataset $\mathcal{D}$.
    \FOR{$N$ epochs}
        \STATE Collect data with $\pi$ in real environment: $\mathcal{D} = \mathcal{D} \cup \{(s_i, a_i, s'_i, r_i)\}_i$
        \STATE Train model $p_\theta$ on dataset $\mathcal{D}$ via maximum likelihood:
        $\theta \leftarrow \text{argmax}_\theta \mathbb{E}_{\mathcal{D}}[\log p_\theta(s',r|s,a)]$
        \STATE Optimize policy under predictive model:
        $\pi \leftarrow \text{argmax}_{\pi'} \hat{\eta}[\pi'] - C(\epsilon_m, \epsilon_\pi)$ \\
        \vspace{0.025in}
    \ENDFOR
\end{algorithmic}
\end{algorithm}


\section{Monotonic improvement with model bias}
\label{sec:monotonic}

In this section, we first lay out a general recipe for MBPO with monotonic improvement.
This general recipe resembles or subsumes several prior algorithms and provides us with a concrete framework that is amenable to theoretical analysis.
Described generically in \autoref{alg:mbpo}, MBPO optimizes a policy under a learned model, collects data under the updated policy, and uses that data to train a new model. While conceptually simple, the performance of MBPO can be difficult to understand; errors in the model can be exploited during policy optimization, resulting in large discrepancies between the predicted returns of the policy under the model and under the true dynamics.

\subsection{Monotonic model-based improvement}
Our goal is to outline a principled framework in which we can provide performance guarantees for model-based algorithms. To show monotonic improvement for a model-based method, we wish to construct a bound of the following form:
\[
\eta[\pi] \ge \hat{\eta}[\pi] - C.
\]
$\eta[\pi]$ denotes the returns of the policy in the true MDP, whereas $\hat{\eta}[\pi]$
denotes the returns of the policy under our model. Such a statement guarantees that, as long as we improve by at least $C$ under the model, we can guarantee improvement on the true MDP.

The gap between true returns and model returns, $C$, can be expressed in terms of two error quantities of the model: generalization error due to sampling, and distribution shift due to the updated policy encountering states not seen during model training. As the model is trained with supervised learning, sample error can be quantified by standard PAC generalization bounds, which bound the difference in expected loss and empirical loss by a constant with high probability~\citep{shalev2014ml}.
We denote this generalization error by $\epsilon_m = \max_t E_{s \sim \datapit}[\TV{p(s',r|s,a)}{p_\theta(s',r|s,a)}]$, which can be estimated in practice by measuring the validation loss of the model on the time-dependent state distribution of the data-collecting policy $\pi_D$.
For our analysis, we denote distribution shift by the maximum total-variation distance, $\max_s \TV{\newpi}{\datapi} \le \epsilon_\pi$, of the policy between iterations. 
In practice, we measure the KL divergence between policies, which we can relate to $\epsilon_\pi$ by Pinsker's inequality.
With these two sources of error controlled (generalization by $\epsilon_m$, and distribution shift by $\epsilon_\pi$), we now present our bound:

\begin{theorem}
\label{thm:mbpo_monotonic}
Let the expected TV-distance
between two transition distributions be bounded at each timestep by $\epsilon_m$ and the policy divergence be bounded by $\epsilon_\pi$. Then the true returns and model returns of the policy are bounded as:
\begin{equation}
\eta[\pi] \ge \hat{\eta}[\pi] - 
\underbrace{\left[\frac{2\gamma \Rmax (\epsilon_m+2\epsilon_\pi)}{(1-\gamma)^2} + \frac{4\Rmax\epsilon_\pi}{(1-\gamma)} \right]}_{C(\epsilon_m, \epsilon_\pi)}
\end{equation}
\end{theorem}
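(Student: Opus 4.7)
The plan is to decompose the gap $\eta[\pi] - \hat{\eta}[\pi]$ by inserting two intermediate quantities involving the data-collecting policy $\pi_D$: the return $\eta^{\pi_D, p}$ of $\pi_D$ under the true dynamics $p$, and the return $\eta^{\pi_D, \hat{p}}$ of $\pi_D$ under the learned model. The triangle inequality then yields
\[
\lvert \eta[\pi] - \hat{\eta}[\pi] \rvert \le \underbrace{\lvert \eta[\pi] - \eta^{\pi_D, p}\rvert}_{\text{policy swap, true dynamics}} + \underbrace{\lvert \eta^{\pi_D, p} - \eta^{\pi_D, \hat{p}}\rvert}_{\text{model swap, data policy}} + \underbrace{\lvert \eta^{\pi_D, \hat{p}} - \hat{\eta}[\pi]\rvert}_{\text{policy swap, model dynamics}}.
\]
This particular decomposition is convenient because $\epsilon_m$ is defined under $\pi_D$'s state distribution, so it applies directly to the middle term without any off-distribution reweighting, while the two outer terms involve only policy swaps and can be treated by a TRPO-style argument.

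To handle each of the three pieces, I will use the representation of $\eta$ as a discounted sum of per-step reward expectations together with the elementary inequality $\lvert \mathbb{E}_{P_1}[f] - \mathbb{E}_{P_2}[f] \rvert \le 2\lVert f\rVert_\infty D_{TV}(P_1, P_2)$ applied with $\lVert r\rVert_\infty \le \Rmax$. This reduces everything to controlling, at each timestep $t$, the TV distance between the joint $(s_t, a_t)$ occupancies of the two processes. I will split this into a state piece and an action piece via $D_{TV}(P_1(s,a), P_2(s,a)) \le D_{TV}(P_1(s), P_2(s)) + \max_s D_{TV}(\pi_1(\cdot|s), \pi_2(\cdot|s))$, and then show by induction on $t$ that the state-marginal TV distance grows by at most the per-step branching error each step --- so it is bounded by $t\epsilon_\pi$ for a policy swap with fixed dynamics and by $t\epsilon_m$ for a model swap with fixed policy. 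This compounding lemma for state marginals is the heart of the argument.

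Putting the pieces together, summing $\gamma^t \cdot t$ against the state-drift contribution produces the $\tfrac{1}{(1-\gamma)^2}$ factor, while summing $\gamma^t \cdot 1$ against the per-step action-drift contribution (which appears only in the two policy swaps) produces the $\tfrac{1}{1-\gamma}$ factor. Each policy-swap term then contributes $2\Rmax \epsilon_\pi \bigl(\tfrac{\gamma}{(1-\gamma)^2} + \tfrac{1}{1-\gamma}\bigr)$, and the model-swap term contributes $\tfrac{2\gamma\Rmax\epsilon_m}{(1-\gamma)^2}$ (with no action-drift contribution, since the policy is held fixed). Collecting gives exactly the claimed $C(\epsilon_m, \epsilon_\pi)$.

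The main obstacle I anticipate is the inductive compounding lemma for state-marginal TV distance, and more specifically the bookkeeping required to ensure that each of the three triangle-inequality pieces really isolates a single source of per-step error (either a policy mismatch or a dynamics mismatch, but never both at once). Beyond that, tracking the factor of two from the expectation-vs.-TV bound and verifying that the model-swap term has no zeroth-order contribution --- since both processes start from $\rho_0$, which is exactly what produces the extra factor of $\gamma$ in the $\epsilon_m$ term --- is essentially routine constant-chasing.
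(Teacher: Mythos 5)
Your proposal is correct and matches the paper's argument in all essentials: the same inductive state-marginal TV compounding lemma, the same split of the joint state-action TV into a state-drift piece and a per-step policy-drift piece, and the same $\sum_t \gamma^t t$ summation producing the $1/(1-\gamma)^2$ factor, arriving at exactly the stated constant $C(\epsilon_m,\epsilon_\pi)$. The only difference is cosmetic: you use a three-term triangle inequality through $\eta^{\pi_D,p}$ and $\eta^{\pi_D,\hat p}$, whereas the paper uses a two-term split through $\eta[\pi_D]$ and handles the simultaneous policy-and-model swap in a single application of its returns-bound lemma with $\delta = \epsilon_m + \epsilon_\pi$; the two bookkeepings yield identical totals.
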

\begin{proof}
See \autoref{app:mbpo_proof}, \autoref{thm:mbpo_performance_bound}.
\end{proof}
This bound implies that as long as we improve the returns under the model $\hat{\eta}[\pi]$ by more than $C(\epsilon_m, \epsilon_\pi)$, we can guarantee improvement under the true returns. 

\subsection{Interpolating model-based and model-free updates}
\label{sec:interpolating}
\autoref{thm:mbpo_monotonic} provides a useful relationship between model returns and true returns. However, it contains several issues regarding cases when the model error $\epsilon_m$ is high.
First, there may not exist a policy such that $\hat{\eta}[\pi] - \eta[\pi]> C(\epsilon_m, \epsilon_\pi)$, in which case improvement is not guaranteed.
Second, the analysis relies on running full rollouts through the model, allowing model errors to compound. This is reflected in the bound by a factor scaling quadratically with the effective horizon,
$1/(1-\gamma)$. 
In such cases, we can improve the algorithm by choosing to rely less on the model and instead more on real data collected from the true dynamics when the model is inaccurate.

In order to allow for dynamic adjustment between model-based and model-free rollouts, we introduce the notion of a \emph{branched rollout},
in which we begin a rollout from a state under the previous policy's state distribution $d_\datapi(s)$ and run $k$ steps according to $\newpi$ under the learned model $p_\theta$.
This branched rollout structure resembles the scheme proposed in the original Dyna algorithm~\citep{sutton1990dyna}, which can be viewed as a special case of a length 1 branched rollouts.
Formally, we can view this as executing a nonstationary policy which begins a rollout by sampling actions from the previous policy $\datapi$. Then, at some specified time, we switch to unrolling the trajectory under the model $p$ and current policy $\newpi$ for $k$ steps. 
Under such a scheme, the returns can be bounded as follows:
\begin{theorem}
\label{thm:mbpo_branched}
Given returns $\etabranch[\pi]$ from the $k$-branched rollout method,
\begin{equation}
\label{eqn:bound_branch}
\eta[\newpi]  \ge \etabranch[\newpi] -
2\Rmax \left[ 
\frac{\gamma^{k+1}\epsilon_\pi}{(1-\gamma)^2}
+ \frac{\gamma^{k}+2}{(1-\gamma)}\epsilon_\pi 
+ \frac{k}{1-\gamma}(\epsilon_m + 2\epsilon_\pi) 
\right].
\end{equation}
\end{theorem}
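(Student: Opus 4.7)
The plan is to bound the gap $\eta[\pi] - \etabranch[\pi]$ by inserting two intermediate nonstationary-policy returns between these two endpoints, each differing from its neighbor in exactly one of: (i) which policy ($\newpi$ versus $\datapi$) generates actions in a given segment, or (ii) which dynamics ($p$ versus $p_\theta$) generates transitions in that segment. Each single-change hop is controlled by a simulation-lemma-style inequality, and combining the three hops by triangle inequality yields the three bracketed terms in \eqref{eqn:bound_branch}. The decomposition re-uses the ingredients behind \autoref{thm:mbpo_monotonic} but applies them only to a $k$-step window rather than to a full rollout.

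Concretely, I would define $\eta^{A}$ as the return of the nonstationary policy "$\datapi$ up to the branch point, then $\newpi$ under true dynamics $p$ forever after," and $\eta^{B}$ as the return of "$\datapi$ up to the branch, then $\newpi$ under $p$ for exactly $k$ steps, then $\datapi$ thereafter." The chain $\eta[\newpi] \to \eta^{A} \to \eta^{B} \to \etabranch[\newpi]$ isolates (1) the cost of replacing $\newpi$ with $\datapi$ in the pre-branch prefix, (2) the cost of the post-$k$-step tail, and (3) the cost of swapping real dynamics for $p_\theta$ during the $k$-step segment. Hop (1) is bounded by a standard performance-difference argument in which the per-step policy discrepancy $\epsilon_\pi$ must propagate through at least $k+1$ discount factors before it can influence any step that the other comparison would have treated differently, producing the $\gamma^{k+1}\epsilon_\pi/(1-\gamma)^2$ contribution. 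Hop (2) pays a boundary cost of order $\epsilon_\pi/(1-\gamma)$ at each end of the window, accounting for the $(\gamma^{k}+2)\epsilon_\pi/(1-\gamma)$ piece.

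Hop (3) is the interesting one. The quantity $\epsilon_m$ is stated as an expected TV distance under the data-collecting state distribution $d_{\datapi}$, but the $k$-step branched rollout visits states drawn according to $\newpi$ operating under $p_\theta$, not according to $\datapi$ under $p$. Applying the triangle inequality to TV distances of the joint $(s,a)$ distributions pays an extra $2\epsilon_\pi$ at each of the $k$ steps to change reference policies; compounding step by step and scaling by $\Rmax/(1-\gamma)$ gives $\frac{k}{1-\gamma}(\epsilon_m + 2\epsilon_\pi)$, matching the last bracketed term. The compounding is linear in $k$ rather than in $1/(1-\gamma)$ precisely because only a $k$-step segment is run under the model, which is the whole point of branching.

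The main obstacle is coefficient bookkeeping rather than any deep technical step. The factor $2$ multiplying $\epsilon_\pi$ in the model-error term is sensitive to the direction in which the triangle inequality is applied to the joint state-action distributions, and isolating the $\gamma^{k+1}$ attenuation in hop (1) requires carefully tracking at which timestep each discrepancy is actually paid rather than lumping all $\epsilon_\pi$ terms together. I expect the argument to factor cleanly through two auxiliary lemmas: a policy-difference lemma bounding $|\eta^{p,\pi_1}-\eta^{p,\pi_2}|$ in terms of $\max_s \TV{\pi_1(\cdot|s)}{\pi_2(\cdot|s)}$ on a specified window, and a model-difference lemma bounding $|\eta^{p,\newpi}-\eta^{p_\theta,\newpi}|$ in terms of the expected TV distance between transition distributions under the relevant visitation measure. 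Given those, assembling the three-hop chain is largely algebraic.
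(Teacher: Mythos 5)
Your proposal is correct and takes essentially the same route as the paper: both bound $\eta[\newpi]-\etabranch[\newpi]$ by telescoping through hybrid reference rollouts that change the policy or the dynamics one segment at a time, apply a simulation-lemma-style returns bound to each hop, and obtain the $\epsilon_m + 2\epsilon_\pi$ per-step rate on the $k$-step window by detouring through $\datapi$ (under whose visitation $\epsilon_m$ is defined) and back. The only cosmetic difference is that the paper makes that detour explicit as an additional reference rollout ($\datapi$ run under the learned model after the branch, the quantity $\etaoldold$ in Theorem A.3), which shifts which hop gets credited with the $\gamma^{k}\epsilon_\pi/(1-\gamma)$ and $2\epsilon_\pi/(1-\gamma)$ pieces, but the assembled totals agree.
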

\begin{proof}
See \autoref{app:mbpo_proof}, \autoref{thm:mbpo_k}.
\end{proof}

\begin{figure}
    \centering
    \includegraphics[width=1.0\linewidth]{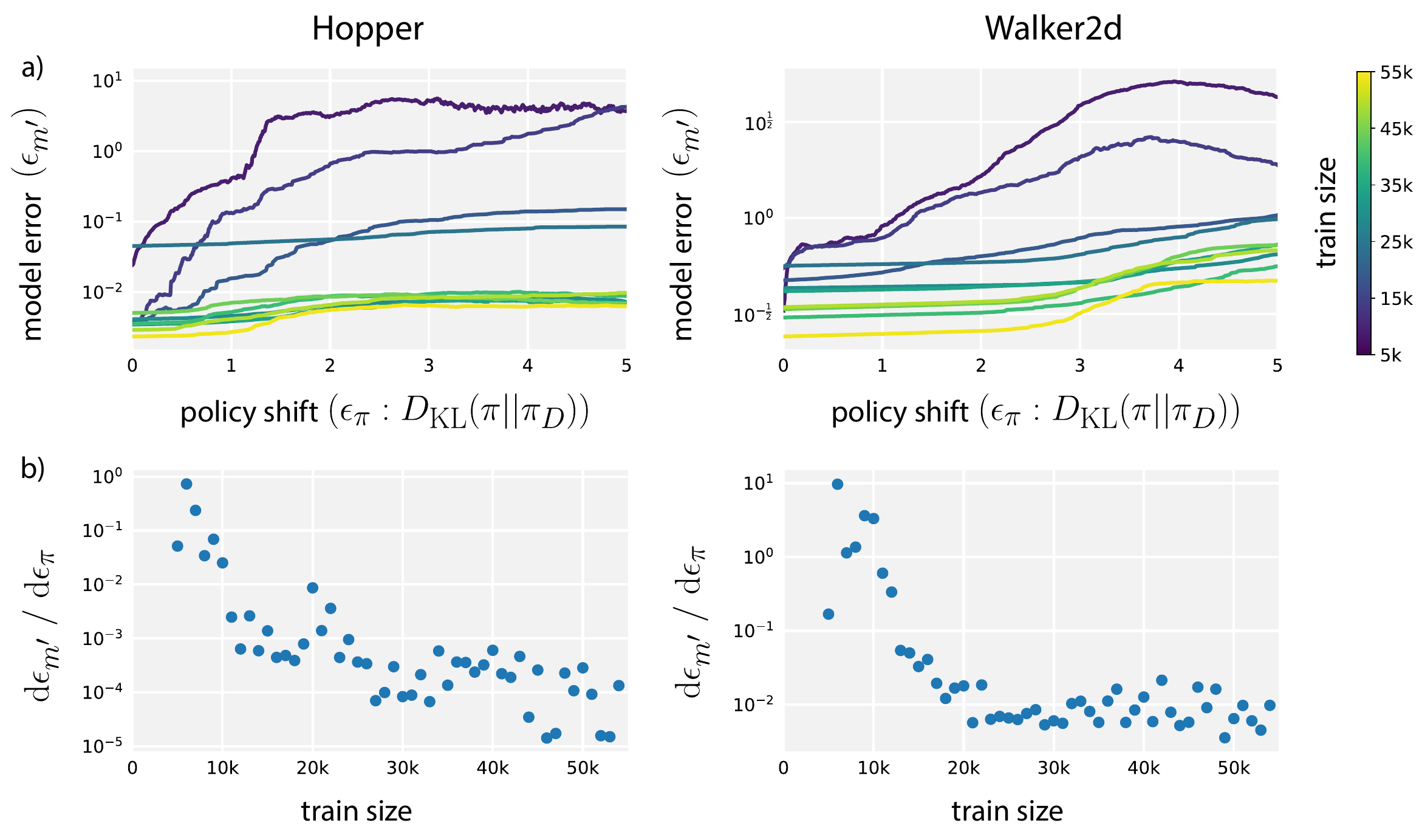}
    \caption{(a) We train a predictive model on the state distribution of $\pi_D$ and evaluate it on policies $\pi$ of varying KL-divergence from $\pi_D$ without retraining. The color of each curve denotes the amount of data from $\pi_D$ used to train the model corresponding to that curve. The offsets of the curves depict the expected trend of increasing training data leading to decreasing model error on the training distribution. However, we also see a decreasing influence of state distribution shift on model error with increasing training data, signifying that the model is generalizing better. (b) We measure the local change in model error versus KL-divergence of the policies at $\eps_\pi=0$ as a proxy to model generalization. 
    }
    \label{fig:generalization}
\end{figure}

\subsection{Model generalization in practice}
\autoref{thm:mbpo_branched} would be most useful for guiding algorithm design if it could be used to determine an optimal model rollout length $k$. 
While this bound does include two competing factors, one exponentially decreasing in $k$ and another scaling linearly with $k$, the values of the associated constants prevent an actual tradeoff; taken literally, this lower bound is maximized when $k=0$, corresponding to not using the model at all.
One limitation of the analysis is pessimistic scaling of model error $\epsilon_m$ with respect to policy shift $\epsilon_\pi$, as we do not make any assumptions about the generalization capacity or smoothness properties of the model \citep{asadi2018lipschitz}. 

To better determine how well we can expect our model to generalize in practice, we empirically measure how the model error under new policies increases with policy change $\epsilon_\pi$.
We train a model on the state distribution of a data-collecting policy $\pi_D$ and then continue policy optimization while measuring the model's loss on all intermediate policies $\pi$ during this optimization. 
\autoref{fig:generalization}a shows that, as expected, the model error increases with the divergence between the current policy $\pi$ and the data-collecting policy $\pi_D$. 
However, the rate of this increase depends on the amount of data collected by $\pi_D$. 
We plot the local change in model error over policy change, $\frac{\text{d} \epsilon_{m'}}{\text{d} \epsilon_\pi}$, in \autoref{fig:generalization}b. 
The decreasing dependence on policy shift shows that not only do models trained with more data perform better on their training distribution, but they also generalize better to nearby distributions.

The clear trend in model error growth rate suggests a way to modify the pessimistic bounds. In the previous analysis, we assumed access to only model error $\epsilon_m$ on the distribution of the most recent data-collecting policy $\pi_D$ and approximated the error on the current distribution as $\epsilon_m + 2\epsilon_\pi$. If we can instead approximate the model error on the distribution of the current policy $\pi$, which we denote as $\epsilon_{m'}$, we may use this directly. For example, approximating $\epsilon_{m'}$ with a linear function of the policy divergence yields:
\[
\hat{\epsilon}_{m'}(\epsilon_\pi) \approx \epsilon_m + \epsilon_\pi \frac{\text{d} \epsilon_{m'}}{\text{d} \epsilon_\pi }
\]
where $\frac{\text{d} \epsilon_{m'}}{\text{d} \epsilon_\pi}$ is empirically estimated as in \autoref{fig:generalization}. Equipped with an approximation of $\epsilon_{m'}$, the model's error on the distribution of the current policy $\pi$, we arrive at the following bound:
\begin{theorem}
Under the $k$-branched rollout method, using model error under the updated policy $\epsilon_{m'} \ge \max_t E_{s \sim \datapit}[\TV{p(s'|s,a)}{\hat{p}(s'|s,a)}]$, we have
\begin{equation}
\label{eqn:bound_branch_newmodel}
\eta[\newpi]  \ge \etabranch[\newpi] -
2\Rmax \left[ 
\frac{\gamma^{k+1}\epsilon_\pi}{(1-\gamma)^2}
+ \frac{\gamma^{k}\epsilon_\pi}{(1-\gamma)} 
+ \frac{k}{1-\gamma}(\epsilon_{m'}) 
\right].
\end{equation}
\end{theorem}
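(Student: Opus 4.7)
The plan is to mirror the proof of \autoref{thm:mbpo_branched} step by step, replacing only the single place where a triangle inequality is used to overbound the model error under the current policy's state distribution in terms of $\epsilon_m + 2\epsilon_\pi$. Because the hypothesis now furnishes a direct bound $\epsilon_{m'}$ on the quantity the triangle inequality was controlling, the factor $\epsilon_m + 2\epsilon_\pi$ appearing in \eqref{eqn:bound_branch} is simply replaced by $\epsilon_{m'}$, and the additive $2\epsilon_\pi$ slack that had been folded into the $(\gamma^k+2)\epsilon_\pi/(1-\gamma)$ term vanishes, leaving the cleaner $\gamma^k\epsilon_\pi/(1-\gamma)$ coefficient of \eqref{eqn:bound_branch_newmodel}.

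Concretely, I would first decompose $\eta[\newpi] - \etabranch[\newpi]$ as a discounted telescoping sum over timesteps, split around the branch point into (i) the pre-branch portion, in which the true-dynamics rollout follows $\newpi$ while the branched rollout follows $\datapi$ under the true dynamics (so only policy divergence $\epsilon_\pi$ contributes), and (ii) the $k$-step post-branch portion, in which both rollouts use $\newpi$ but one propagates through $p$ and the other through $p_\theta$. For (i), I would reuse the occupancy-measure-shift argument from \autoref{thm:mbpo_monotonic}: per-timestep TV between the two state distributions grows as $t\epsilon_\pi$, and summing against $\gamma^t \Rmax$ reproduces the two purely-$\epsilon_\pi$ terms $\gamma^{k+1}\epsilon_\pi/(1-\gamma)^2$ and $\gamma^k\epsilon_\pi/(1-\gamma)$.

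For (ii), I would invoke the standard telescoping lemma on TV distances between $k$-step trajectory distributions: with a common starting state distribution and a common policy $\newpi$, the joint-trajectory TV after $k$ steps is bounded by $k$ times the per-step expected TV between $p(\cdot|s,a)$ and $p_\theta(\cdot|s,a)$, where the expectation is against the state distribution actually visited during the model rollout, which is (a function of) the current policy $\newpi$. By hypothesis this per-step quantity is at most $\epsilon_{m'}$, so after multiplying by $\Rmax$ and summing the geometric series we obtain $\frac{k}{1-\gamma}\epsilon_{m'}$, with no $\epsilon_\pi$ inflation because we never need to translate between the state distributions of $\datapi$ and $\newpi$ in this step.

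The main obstacle is keeping careful track of which state distribution defines the expectation in $\epsilon_{m'}$: the stated definition writes $s\sim\datapit$, but the surrounding text describes $\epsilon_{m'}$ as the model error under the \emph{current} policy $\newpi$, and it is only the latter reading that makes the $+2\epsilon_\pi$ slack genuinely unnecessary; the former reading would simply reintroduce exactly the triangle-inequality correction we are trying to remove. Resolving this notational point (in favor of the current-policy interpretation, consistent with the linear extrapolation formula for $\hat\epsilon_{m'}(\epsilon_\pi)$ immediately preceding the theorem) is the only genuinely new content. Beyond that, the same auxiliary lemmas on $|\eta - \hat\eta|$ in terms of per-step dynamics and policy TV gaps used to prove \autoref{thm:mbpo_branched} carry through unchanged.
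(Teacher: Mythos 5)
Your proposal is correct and follows essentially the same route as the paper: the paper introduces an intermediate reference rollout $\etaoldnew$ (old policy under true dynamics pre-branch, new policy under true dynamics post-branch) and bounds the two resulting differences with its branched-returns lemma (\autoref{thm:returns_bound_k}), setting $\eps_\pi^\pre \le \eps_\pi$ for the pre-branch term and $\eps_m^\post \le \eps_{m'}$ for the post-branch term --- exactly your split (i)/(ii). Your reading of $\epsilon_{m'}$ as the error under the \emph{current} policy's state distribution is the one the appendix statement (\autoref{thm:mbpo_k_true_model}) actually uses, so the $s\sim\datapit$ in the main-text statement is indeed a typo that you were right to resolve in favor of $\newpit$.
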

\begin{proof}
See \autoref{app:mbpo_proof}, \autoref{thm:mbpo_k_true_model}.
\end{proof}

While this bound appears similar to \autoref{thm:mbpo_branched}, the important difference is that this version actually motivates model usage. 
More specifically, $k^* = \argmin{k} \left[ \frac{\gamma^{k+1}\epsilon_\pi}{(1-\gamma)^2}
+ \frac{\gamma^{k}\epsilon_\pi}{(1-\gamma)} 
+ \frac{k}{1-\gamma}(\epsilon_{m'}) \right] > 0$ for sufficiently low $\epsilon_{m'}$.
While this insight does not immediately suggest an algorithm design by itself, we can build on this idea to develop a method that makes limited use of truncated, but nonzero-length, model rollouts.

\section{Model-based policy optimization with deep reinforcement learning}

We now present a practical model-based reinforcement learning algorithm based on the derivation in the previous section. 
Instantiating \autoref{alg:mbpo} amounts to specifying three design decisions: (1) the parametrization of the model $p_\theta$, (2) how the policy $\pi$ is optimized given model samples, and (3) how to query the model for samples for policy optimization.

\paragraph{Predictive model.}
In our work, we use a bootstrap ensemble of dynamics models $\{p_\theta^1, ..., p_\theta^B\}$. 
Each member of the ensemble is a probabilistic neural network whose outputs parametrize a Gaussian distribution with diagonal covariance: $p_\theta^i(s_{t+1}, r | s_t, a_t) = \mathcal{N}(\mu_\theta^i(s_t, a_t), \Sigma_\theta^i(s_t, a_t)))$.
Individual probabilistic models capture aleatoric uncertainty, or the noise in the outputs with respect to the inputs. The bootstrapping procedure accounts for epistemic uncertainty, or uncertainty in the model parameters, which is crucial in regions when data is scarce and the model can by exploited by policy optimization. \citet{chua2018pets} demonstrate that a proper handling of both of these uncertainties allows for asymptotically competitive model-based learning. 
To generate a prediction from the ensemble, we simply select a model uniformly at random, allowing for different transitions along a single model rollout to be sampled from different dynamics models. 

\paragraph{Policy optimization.}
We adopt soft-actor critic (SAC)~\citep{haarnoja18sac} as our policy optimization algorithm. 
SAC alternates between a policy evaluation step, which estimates $Q^\pi(s,a) = E_{\pi}\left[\sum_{t=0}^\infty \gamma^t r(s_t, a_t) | s_0=s, a_0=a\right]$ using the Bellman backup operator, and a policy improvement step, which trains an actor $\pi$ by minimizing the expected KL-divergence $J_\pi(\phi, \mathcal{D}) = \mathbb{E}_{s_t \sim \mathcal{D}} [ \KL{\pi}{\exp\{Q^\pi - V^\pi\}} ]$.

\paragraph{Model usage.}
Many recent model-based algorithms have focused on the setting in which model rollouts begin from the initial state distribution \citep{kurutach2018modelensemble,clavera2018model}. 
While this may be a more faithful interpretation of \autoref{alg:mbpo}, as it is optimizing a policy purely under the state distribution of the model, this approach entangles the model rollout length with the task horizon. 
Because compounding model errors make extended rollouts difficult, these works evaluate on truncated versions of benchmarks. 
The branching strategy described in Section~\ref{sec:interpolating}, in which model rollouts begin from the state distribution of a different policy under the true environment dynamics, effectively relieves this limitation.
In practice, branching replaces few long rollouts from the initial state distribution with many short rollouts starting from replay buffer states. 

\begin{algorithm}[t]
\caption{ Model-Based Policy Optimization with Deep Reinforcement Learning}
\label{alg:practical}
\begin{algorithmic}[1]
    \STATE Initialize policy $\pi_\phi$, predictive model $p_\theta$, environment dataset $\mathcal{D}_\text{env}$, model dataset $\mathcal{D}_\text{model}$
    \FOR{$N$ epochs}
        \STATE Train model $p_\theta$ on $\mathcal{D}_\text{env}$ via maximum likelihood
        \FOR{$E$ steps}
            \STATE Take action in environment according to $\pi_\phi$; add to $\mathcal{D_\text{env}}$
            \FOR{$M$ model rollouts}
                \STATE Sample $s_t$ uniformly from $\mathcal{D}_\text{env}$
                \STATE Perform $k$-step model rollout starting from $s_t$ using policy $\pi_\phi$; add to $\mathcal{D_\text{model}}$
            \ENDFOR
            \FOR{$G$ gradient updates}
                \STATE Update policy parameters on model data: $\phi \leftarrow \phi - \lambda_\pi \hat{\nabla}_\phi J_\pi(\phi, \mathcal{D}_\text{model})$
            \ENDFOR
        \ENDFOR
    \ENDFOR
\end{algorithmic}
\end{algorithm}

A practical implementation of MBPO is described in \autoref{alg:practical}.\footnote{When SAC is used as the policy optimization algorithm, we must also perform gradient updates on the parameters of the $Q$-functions, but we omit these updates for clarity.}
The primary differences from the general formulation in \autoref{alg:mbpo} are $k$-length rollouts from replay buffer states in the place of optimization under the model's state distribution and a fixed number of policy update steps in the place of an intractable $\text{argmax}$. 
Even when the horizon length $k$ is short, we can perform many such short rollouts to yield a large set of model samples for policy optimization.
This large set allows us to take many more policy gradient steps per environment sample (between 20 and 40) than is typically stable in model-free algorithms. 
A full listing of the hyperparameters included in \autoref{alg:practical} for all evaluation environments is given in \autoref{app:hyperparams}.

\begin{figure}
    \centering
    \includegraphics[width=1.0\linewidth]{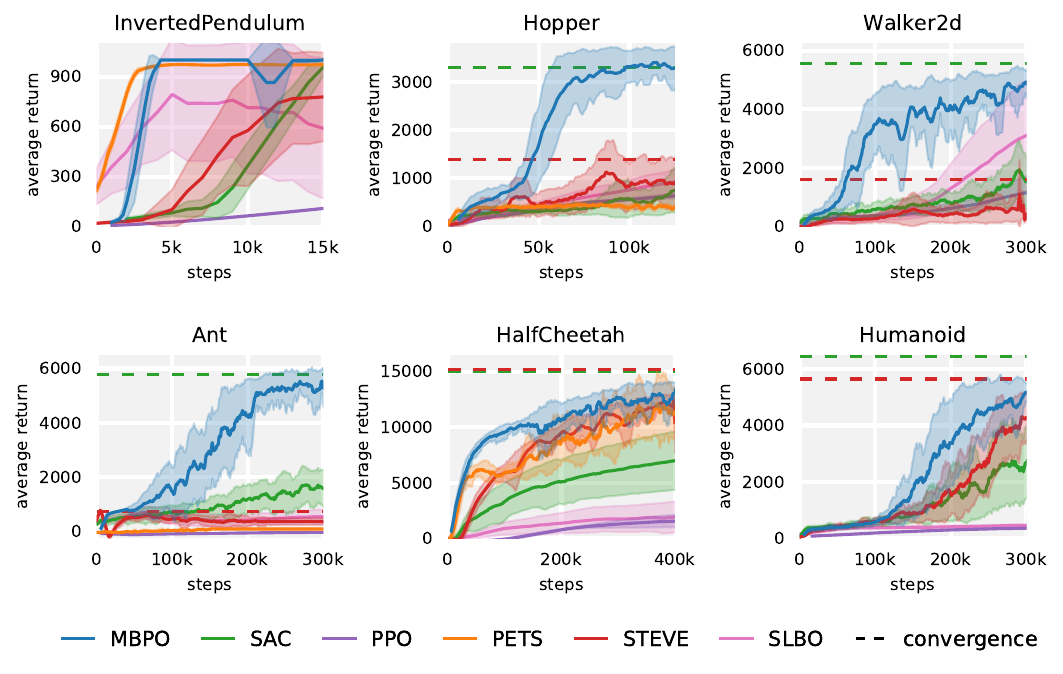}
    \caption{
    Training curves of MBPO and five baselines on continuous control benchmarks. Solid curves depict the mean of five trials and shaded regions correspond to standard deviation among trials. MBPO has asymptotic performance similar to the best model-free algorithms while being faster than the model-based baselines. 
    For example, MBPO's performance on the Ant task at 300 thousand steps matches that of SAC at 3 million steps. We evaluated all algorithms on the standard 1000-step versions of the benchmarks.
    }
    \label{fig:results}
\end{figure}
\section{Experiments}
\label{sec:experiments}

Our experimental evaluation aims to study two primary questions: (1) How well does MBPO perform on benchmark reinforcement learning tasks, compared to state-of-the-art model-based and model-free algorithms? (2) What conclusions can we draw about appropriate model usage?

\subsection{Comparative evaluation}

In our comparisons, we aim to understand both how well our method compares to state-of-the-art model-based and model-free methods and how our design choices affect performance. We compare to two state-of-the-art model-free methods, SAC~\citep{haarnoja18sac} and PPO~\citep{schulman2017ppo}, both to establish a baseline and, in the case of SAC, measure the benefit of incorporating a model, as our model-based method uses SAC for policy learning as well. For model-based methods, we compare to PETS~\citep{chua2018pets}, which does not perform explicit policy learning, but directly uses the model for planning; STEVE~\citep{buckman2018steve}, which also uses short-horizon model-based rollouts, but incorporates data from these rollouts into value estimation rather than policy learning; and
SLBO~\citep{luo2018algorithmic}, a model-based algorithm with performance guarantees that performs model rollouts from the initial state distribution.
These comparisons represent the state-of-the-art in both model-free and model-based reinforcement learning.

We evaluate MBPO and these baselines on a set of MuJoCo continuous control tasks \citep{todorov2012mujoco} commonly used to evaluate model-free algorithms.
Note that some recent works in model-based reinforcement learning have used modified versions of these benchmarks, where the task horizon is chosen to be shorter so as to simplify the modeling problem~\citep{kurutach2018modelensemble,clavera2018model}. We use the standard full-length version of these tasks.
MBPO also does not assume access to privileged information in the form of fully observable states or the reward function for offline evaluation.

\autoref{fig:results} shows the learning curves for all methods, along with asymptotic performance of algorithms which do not converge in the region shown.
These results show that MBPO learns substantially faster, an order of magnitude faster on some tasks, than prior model-free methods, while attaining comparable final performance. 
For example, MBPO's performance on the Ant task at 300 thousand steps is the same as that of SAC at 3 million steps.
On Hopper and Walker2d, MBPO requires the equivalent of 14 and 40 minutes, respectively, of simulation time if the simulator were running in real time.
More crucially, MBPO learns on some of the higher-dimensional tasks, such as Ant, which pose problems for purely model-based approaches such as PETS.

\begin{figure}
    \centering
    \includegraphics[width=1.0\linewidth]{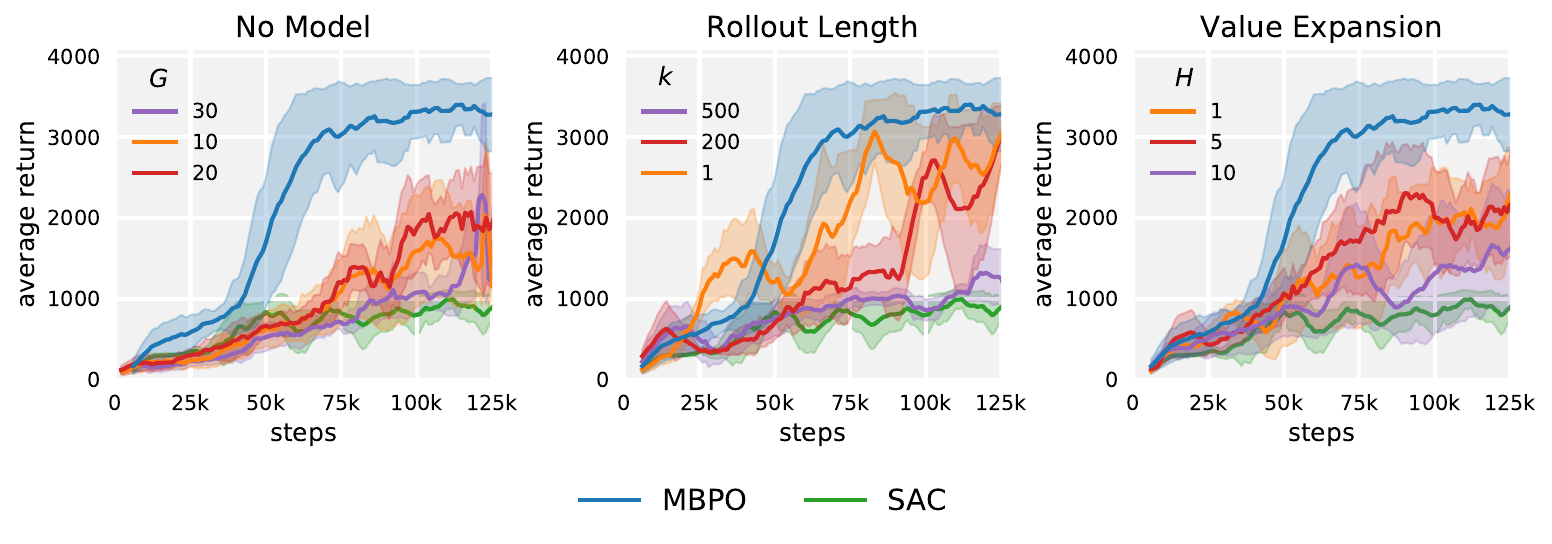}
    \vspace{-0.1in}
    \caption{
    \textbf{No model:} SAC run without model data but with the same range of gradient updates per environment step ($G$) as MBPO on the Hopper task. \textbf{Rollout length:} While we find that increasing rollout length $k$ over time yields the best performance for MBPO (Appendix~\ref{app:hyperparams}), single-step rollouts provide a baseline that is difficult to beat. \textbf{Value expansion:} We implement $H$-step model value expansion from \cite{feinberg2018mve} on top of SAC for a more informative comparison. We also find in the context of value expansion that single-step model rollouts are surprisingly competitive.
    }
    \label{fig:ablations}
\end{figure}

\subsection{Design evaluation}
We next make ablations and modifications to our method to better understand why MBPO outperforms prior approaches. Results for the following experiments are shown in \autoref{fig:ablations}.

\paragraph{No model.}
The ratio between the number of gradient updates and environment samples, $G$, is much higher in MBPO than in comparable model-free algorithms because the model-generated data reduces the risk of overfitting. We run standard SAC with similarly high ratios, but without model data, to ensure that the model is actually helpful. While increasing the number of gradient updates per sample taken in SAC does marginally speed up learning, we cannot match the sample-efficiency of our method without using the model. For hyperparameter settings of MBPO, see \autoref{app:hyperparams}. 

\paragraph{Rollout horizon.}
While the best-performing rollout length schedule on the Hopper task linearly increases from $k=1$ to $15$ (\autoref{app:hyperparams}), we find that fixing the rollout length at 1 for the duration of training retains much of the benefit of our model-based method.
We also find that our model is accurate enough for 200-step rollouts, although this performs worse than shorter values when used for policy optimization. 500-step rollouts are too inaccurate for effective learning. While more precise fine-tuning is always possible, augmenting policy training data with single-step model rollouts provides a baseline that is surprisingly difficult to beat and outperforms recent methods which perform longer rollouts from the initial state distribution. This result agrees with our theoretical analysis which prescribes short model-based rollouts to mitigate compounding modeling errors.

\paragraph{Value expansion.}
An alternative to using model rollouts for direct training of a policy is to improve the quality of target values of samples collected from the real environment. 
This technique is used in model-based value expansion (MVE) \citep{feinberg2018mve} and STEVE \citep{buckman2018steve}. 
While MBPO outperforms both of these approaches, there are other confounding factors making a head-to-head comparison difficult, such as the choice of policy learning algorithm.
To better determine the relationship between training on model-generated data and using model predictions to improve target values, we augment SAC with the $H$-step $Q$-target objective: 
\[
\frac{1}{H} \sum_{t=-1}^{H-1}(Q(\hat{s}_t, \hat{a}_t - (\sum_{k=t}^{H-1}\gamma^{k-t} \hat{r}_k + \gamma^H Q(\hat{s}_H, \hat{a}_H))^2
\]
in which $\hat{s}_t$ and $\hat{r}_t$ are model predictions and $\hat{a}_t \sim \pi(a_t | \hat{s}_t)$. 
We refer the reader to \cite{feinberg2018mve} for further discussion of this approach. 
We verify that SAC also benefits from improved target values, and similar to our conclusions from MBPO, single-step model rollouts ($H=1$) provide a surprisingly effective baseline. 
While model-generated data augmentation and value expansion are in principle complementary approaches, preliminary experiments did not show improvements to MBPO by using improved target value estimates.

\paragraph{Model exploitation.} 
We analyze the problem of ``model exploitation,'' which a number of recent works have raised as a primary challenge in model-based reinforcement learning~\citep{rajeswaran2016epopt,clavera2018model,kurutach2018modelensemble}. 
We plot empirical returns of a trained policy on the Hopper task under both the real environment and the model in \autoref{fig:model_vis} (c) and find, surprisingly, that they are highly correlated, indicating that a policy trained on model-predicted transitions may not exploit the model at all if the rollouts are sufficiently short. 
This is likely because short rollouts are more likely to reflect the real dynamics (\autoref{fig:model_vis} a-b), reducing the opportunities for policies to rely on inaccuracies of model predictions. 
While the models for other environments are not necessarily as accurate as that for Hopper, we find across the board that model returns tend to \textit{underestimate} real environment returns in MBPO.

\begin{figure}
    \centering
    \begin{minipage}{.6\textwidth}
        \centering
        \includegraphics[width=1\linewidth]{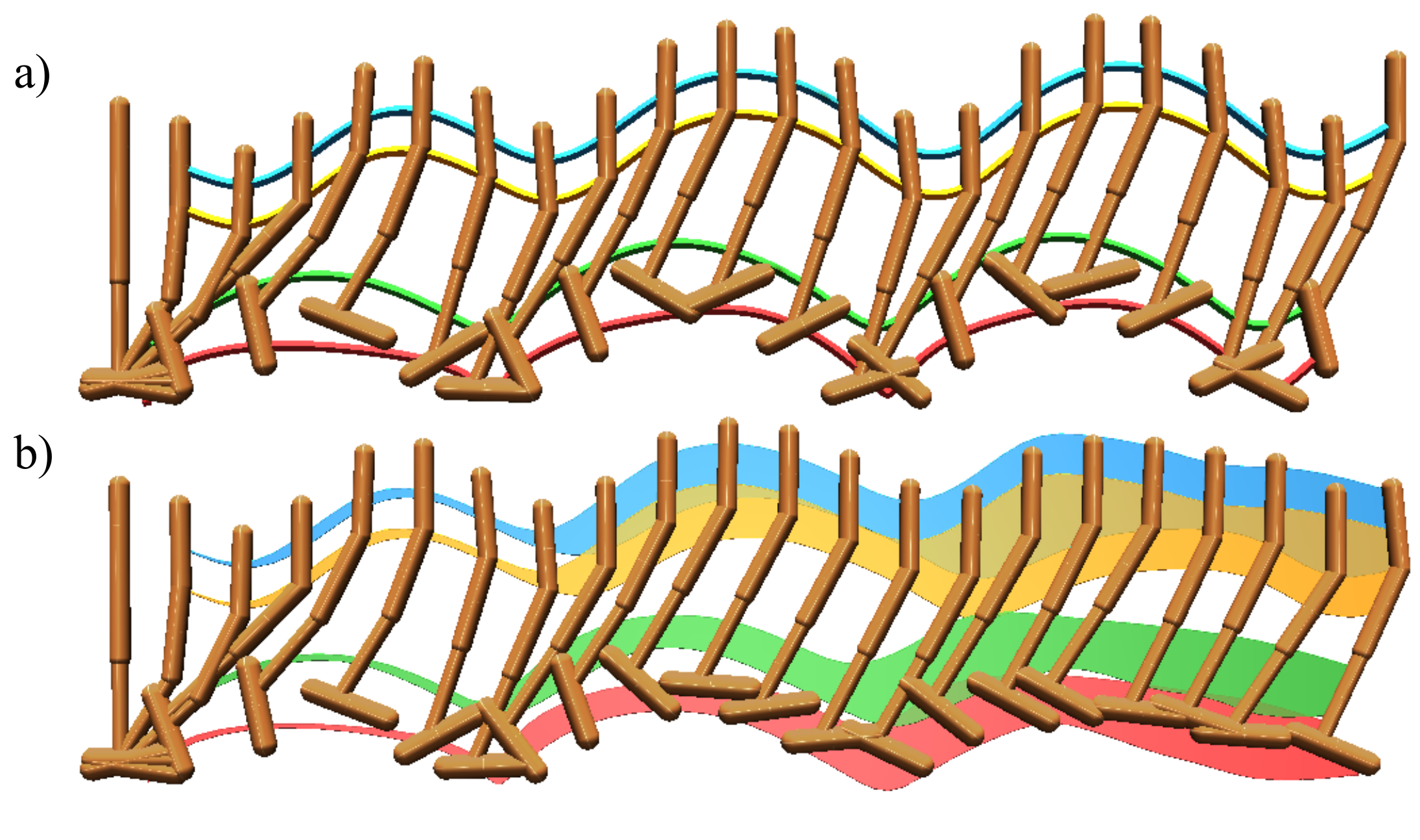}
        \end{minipage}
    \quad
    \begin{minipage}{.35\textwidth}
        \centering
        \includegraphics[width=0.9\linewidth]{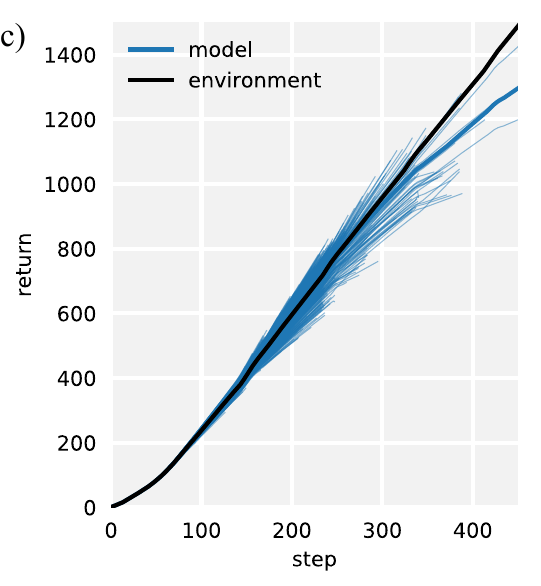}
    \end{minipage}
    \caption{a) A 450-step hopping sequence performed in the real environment, with the trajectory of the body's joints traced through space. b) The same action sequence rolled out under the model 1000 times, with shaded regions corresponding to one standard deviation away from the mean prediction. The growing uncertainty and deterioration of a recognizable sinusoidal motion underscore accumulation of model errors. c) Cumulative returns of the same policy under the model and actual environment dynamics reveal that the policy is not exploiting the learned model. Thin blue lines reflect individual model rollouts and the thick blue line is their mean.
    }
    \label{fig:model_vis}
\end{figure}

\vspace{-0.055in}
\section{Discussion}
\vspace{-0.055in}
We have investigated the role of model usage in policy optimization procedures through both a theoretical and empirical lens. 
We have shown that, while it is possible to formulate model-based reinforcement learning algorithms with monotonic improvement guarantees, such an analysis cannot necessarily be used to motivate using a model in the first place. 
However, an empirical study of model generalization shows that predictive models can indeed perform well outside of their training distribution. 
Incorporating a linear approximation of model generalization into the analysis gives rise to a more reasonable tradeoff that does in fact justify using the model for truncated rollouts.
The algorithm stemming from this insight, MBPO, has asymptotic performance rivaling the best model-free algorithms, learns substantially faster than prior model-free or model-based methods, and scales to long horizon tasks that often cause model-based methods to fail. 
We experimentally investigate the tradeoffs associated with our design decisions, and find that model rollouts as short as a single step can provide pronounced benefits to policy optimization procedures.

\subsection*{Acknowledgements}
We thank Anusha Nagabandi, Michael Chang, Chelsea Finn, Pulkit Agrawal, and Jacob Steinhardt for insightful discussions; Vitchyr Pong, Alex Lee, Kyle Hsu, and Aviral Kumar for feedback on an early draft of the paper; and Kristian Hartikainen for help with the SAC baseline. 
This research was partly supported by the NSF via IIS-1651843, IIS-1700697, and IIS-1700696, the Office of Naval Research, ARL DCIST CRA W911NF-17-2-0181, and computational resource donations from Google.
M.J. is supported by fellowships from the National Science Foundation and the Open Philanthropy Project.
M.Z. is supported by an NDSEG fellowship.

\bibliography{mbpo}

\begin{thebibliography}{45}
\providecommand{\natexlab}[1]{#1}
\providecommand{\url}[1]{\texttt{#1}}
\expandafter\ifx\csname urlstyle\endcsname\relax
  \providecommand{\doi}[1]{doi: #1}\else
  \providecommand{\doi}{doi: \begingroup \urlstyle{rm}\Url}\fi

\bibitem[Asadi et~al.(2018)Asadi, Misra, and Littman]{asadi2018lipschitz}
Asadi, K., Misra, D., and Littman, M.
\newblock {L}ipschitz continuity in model-based reinforcement learning.
\newblock In \emph{International Conference on Machine Learning}, 2018.

\bibitem[{Atkeson} \& {Schaal}(1997){Atkeson} and
  {Schaal}]{atkeson1997learning}
{Atkeson}, C.~G. and {Schaal}, S.
\newblock Learning tasks from a single demonstration.
\newblock In \emph{International Conference on Robotics and Automation}, 1997.

\bibitem[Buckman et~al.(2018)Buckman, Hafner, Tucker, Brevdo, and
  Lee]{buckman2018steve}
Buckman, J., Hafner, D., Tucker, G., Brevdo, E., and Lee, H.
\newblock Sample-efficient reinforcement learning with stochastic ensemble
  value expansion.
\newblock In \emph{Advances in Neural Information Processing Systems}, 2018.

\bibitem[Chua et~al.(2018)Chua, Calandra, McAllister, and Levine]{chua2018pets}
Chua, K., Calandra, R., McAllister, R., and Levine, S.
\newblock Deep reinforcement learning in a handful of trials using
  probabilistic dynamics models.
\newblock In \emph{Advances in Neural Information Processing Systems}. 2018.

\bibitem[Clavera et~al.(2018)Clavera, Rothfuss, Schulman, Fujita, Asfour, and
  Abbeel]{clavera2018model}
Clavera, I., Rothfuss, J., Schulman, J., Fujita, Y., Asfour, T., and Abbeel, P.
\newblock Model-based reinforcement learning via meta-policy optimization.
\newblock In \emph{Conference on Robot Learning}, 2018.

\bibitem[Dean et~al.(2017)Dean, Mania, Matni, Recht, and Tu]{dean2017lqr}
Dean, S., Mania, H., Matni, N., Recht, B., and Tu, S.
\newblock On the sample complexity of the linear quadratic regulator.
\newblock \emph{arXiv preprint arXiv:1710.01688}, 2017.

\bibitem[Deisenroth \& Rasmussen(2011)Deisenroth and
  Rasmussen]{deisenroth2011pilco}
Deisenroth, M. and Rasmussen, C.~E.
\newblock {PILCO}: A model-based and data-efficient approach to policy search.
\newblock In \emph{International Conference on Machine Learning}, 2011.

\bibitem[Depeweg et~al.(2016)Depeweg, Hern{\'a}ndez-Lobato, Doshi-Velez, and
  Udluft]{depeweg2016learning}
Depeweg, S., Hern{\'a}ndez-Lobato, J.~M., Doshi-Velez, F., and Udluft, S.
\newblock Learning and policy search in stochastic dynamical systems with
  bayesian neural networks.
\newblock In \emph{International Conference on Learning Representations}, 2016.

\bibitem[{Draeger} et~al.(1995){Draeger}, {Engell}, and {Ranke}]{draeger1995ph}
{Draeger}, A., {Engell}, S., and {Ranke}, H.
\newblock Model predictive control using neural networks.
\newblock \emph{IEEE Control Systems Magazine}, 1995.

\bibitem[Du \& Narasimhan(2019)Du and Narasimhan]{du2019dynprior}
Du, Y. and Narasimhan, K.
\newblock Task-agnostic dynamics priors for deep reinforcement learning.
\newblock In \emph{International Conference on Machine Learning}, 2019.

\bibitem[Ebert et~al.(2018)Ebert, Finn, Dasari, Xie, Lee, and
  Levine]{ebert2018vf}
Ebert, F., Finn, C., Dasari, S., Xie, A., Lee, A.~X., and Levine, S.
\newblock Visual foresight: Model-based deep reinforcement learning for
  vision-based robotic control.
\newblock \emph{arXiv preprint arXiv:1812.00568}, 2018.

\bibitem[Farahmand et~al.(2017)Farahmand, Barreto, and
  Nikovski]{farahmand2017valueaware}
Farahmand, A.-M., Barreto, A., and Nikovski, D.
\newblock Value-aware loss function for model-based reinforcement learning.
\newblock In \emph{International Conference on Artificial Intelligence and
  Statistics}, 2017.

\bibitem[Feinberg et~al.(2018)Feinberg, Wan, Stoica, Jordan, Gonzalez, and
  Levine]{feinberg2018mve}
Feinberg, V., Wan, A., Stoica, I., Jordan, M.~I., Gonzalez, J.~E., and Levine,
  S.
\newblock Model-based value estimation for efficient model-free reinforcement
  learning.
\newblock In \emph{International Conference on Machine Learning}, 2018.

\bibitem[Gal et~al.(2016)Gal, McAllister, and Rasmussen]{gal2016improving}
Gal, Y., McAllister, R., and Rasmussen, C.~E.
\newblock Improving {PILCO} with {Bayesian} neural network dynamics models.
\newblock In \emph{ICML Workshop on Data-Efficient Machine Learning Workshop},
  2016.

\bibitem[Gu et~al.(2016)Gu, Lillicrap, Sutskever, and Levine]{gu2016mba}
Gu, S., Lillicrap, T., Sutskever, I., and Levine, S.
\newblock Continuous deep {Q}-learning with model-based acceleration.
\newblock In \emph{International Conference on Machine Learning}, 2016.

\bibitem[Haarnoja et~al.(2018)Haarnoja, Zhou, Abbeel, and
  Levine]{haarnoja18sac}
Haarnoja, T., Zhou, A., Abbeel, P., and Levine, S.
\newblock Soft actor-critic: Off-policy maximum entropy deep reinforcement
  learning with a stochastic actor.
\newblock In \emph{International Conference on Machine Learning}, 2018.

\bibitem[Heess et~al.(2015)Heess, Wayne, Silver, Lillicrap, Tassa, and
  Erez]{heess2015svg}
Heess, N., Wayne, G., Silver, D., Lillicrap, T., Tassa, Y., and Erez, T.
\newblock Learning continuous control policies by stochastic value gradients.
\newblock In \emph{Advances in Neural Information Processing Systems}, 2015.

\bibitem[Holland et~al.(2018)Holland, Talvitie, and Bowling]{holland2018effect}
Holland, G.~Z., Talvitie, E.~J., and Bowling, M.
\newblock The effect of planning shape on dyna-style planning in
  high-dimensional state spaces.
\newblock \emph{arXiv preprint arXiv:1806.01825}, 2018.

\bibitem[Kaelbling et~al.(1996)Kaelbling, Littman, and Moore]{kaelbling}
Kaelbling, L.~P., Littman, M.~L., and Moore, A.~P.
\newblock Reinforcement learning: A survey.
\newblock \emph{Journal of Artificial Intelligence Research}, 4:\penalty0
  237--285, 1996.

\bibitem[Kaiser et~al.(2019)Kaiser, Babaeizadeh, Milos, Osinski, Campbell,
  Czechowski, Erhan, Finn, Kozakowsi, Levine, Sepassi, Tucker, and
  Michalewski]{kaiser2019mbatari}
Kaiser, L., Babaeizadeh, M., Milos, P., Osinski, B., Campbell, R.~H.,
  Czechowski, K., Erhan, D., Finn, C., Kozakowsi, P., Levine, S., Sepassi, R.,
  Tucker, G., and Michalewski, H.
\newblock Model-based reinforcement learning for {Atari}.
\newblock \emph{arXiv preprint arXiv:1903.00374}, 2019.

\bibitem[Kalweit \& Boedecker(2017)Kalweit and
  Boedecker]{kalweit2017uncertainty}
Kalweit, G. and Boedecker, J.
\newblock Uncertainty-driven imagination for continuous deep reinforcement
  learning.
\newblock In \emph{Conference on Robot Learning}, 2017.

\bibitem[Kumar et~al.(2016)Kumar, Todorov, and Levine]{kumar2016optimal}
Kumar, V., Todorov, E., and Levine, S.
\newblock Optimal control with learned local models: Application to dexterous
  manipulation.
\newblock In \emph{International Conference on Robotics and Automation}, 2016.

\bibitem[Kurutach et~al.(2018)Kurutach, Clavera, Duan, Tamar, and
  Abbeel]{kurutach2018modelensemble}
Kurutach, T., Clavera, I., Duan, Y., Tamar, A., and Abbeel, P.
\newblock Model-ensemble trust-region policy optimization.
\newblock In \emph{International Conference on Learning Representations}, 2018.

\bibitem[Levine \& Koltun(2013)Levine and Koltun]{levine2013gps}
Levine, S. and Koltun, V.
\newblock Guided policy search.
\newblock In \emph{International Conference on Machine Learning}, 2013.

\bibitem[Lillicrap et~al.(2016)Lillicrap, Hunt, Pritzel, Heess, Erez, Tassa,
  Silver, and Wierstra]{lillicrap2016continuous}
Lillicrap, T.~P., Hunt, J.~J., Pritzel, A., Heess, N., Erez, T., Tassa, Y.,
  Silver, D., and Wierstra, D.
\newblock Continuous control with deep reinforcement learning.
\newblock In \emph{International Conference on Learning Representations}, 2016.

\bibitem[Luo et~al.(2019)Luo, Xu, Li, Tian, Darrell, and
  Ma]{luo2018algorithmic}
Luo, Y., Xu, H., Li, Y., Tian, Y., Darrell, T., and Ma, T.
\newblock Algorithmic framework for model-based deep reinforcement learning
  with theoretical guarantees.
\newblock In \emph{International Conference on Learning Representations}, 2019.

\bibitem[Mnih et~al.(2015)Mnih, Kavukcuoglu, Silver, Rusu, Veness, Bellemare,
  Graves, Riedmiller, Fidjeland, Ostrovski, Petersen, Beattie, Sadik,
  Antonoglou, King, Kumaran, Wierstra, Legg, and Hassabis]{mnih2015humanlevel}
Mnih, V., Kavukcuoglu, K., Silver, D., Rusu, A.~A., Veness, J., Bellemare,
  M.~G., Graves, A., Riedmiller, M., Fidjeland, A.~K., Ostrovski, G., Petersen,
  S., Beattie, C., Sadik, A., Antonoglou, I., King, H., Kumaran, D., Wierstra,
  D., Legg, S., and Hassabis, D.
\newblock Human-level control through deep reinforcement learning.
\newblock \emph{Nature}, 2015.

\bibitem[Nagabandi et~al.(2018)Nagabandi, Kahn, S.~Fearing, and
  Levine]{nagabandi2018mbmf}
Nagabandi, A., Kahn, G., S.~Fearing, R., and Levine, S.
\newblock Neural network dynamics for model-based deep reinforcement learning
  with model-free fine-tuning.
\newblock In \emph{International Conference on Robotics and Automation}, 2018.

\bibitem[Oh et~al.(2015)Oh, Guo, Lee, Lewis, and Singh]{oh2015vpatari}
Oh, J., Guo, X., Lee, H., Lewis, R., and Singh, S.
\newblock Action-conditional video prediction using deep networks in {Atari}
  games.
\newblock In \emph{Advances in Neural Information Processing Systems}, 2015.

\bibitem[Oh et~al.(2017)Oh, Singh, and Lee]{oh2017vpn}
Oh, J., Singh, S., and Lee, H.
\newblock Value prediction network.
\newblock In \emph{Advances in Neural Information Processing Systems}, 2017.

\bibitem[Pich\'{e} et~al.(2019)Pich\'{e}, Thomas, Ibrahim, Bengio, and
  Pal]{piche2019smc}
Pich\'{e}, A., Thomas, V., Ibrahim, C., Bengio, Y., and Pal, C.
\newblock Probabilistic planning with sequential {Monte Carlo} methods.
\newblock In \emph{International Conference on Learning Representations}, 2019.

\bibitem[Racani\`{e}re et~al.(2017)Racani\`{e}re, Weber, Reichert, Buesing,
  Guez, Jimenez~Rezende, Puigdom\`{e}nech~Badia, Vinyals, Heess, Li, Pascanu,
  Battaglia, Hassabis, Silver, and Wierstra]{racaniere2017i2a}
Racani\`{e}re, S., Weber, T., Reichert, D., Buesing, L., Guez, A.,
  Jimenez~Rezende, D., Puigdom\`{e}nech~Badia, A., Vinyals, O., Heess, N., Li,
  Y., Pascanu, R., Battaglia, P., Hassabis, D., Silver, D., and Wierstra, D.
\newblock Imagination-augmented agents for deep reinforcement learning.
\newblock In \emph{Advances in Neural Information Processing Systems}. 2017.

\bibitem[Rajeswaran et~al.(2017)Rajeswaran, Ghotra, Levine, and
  Ravindran]{rajeswaran2016epopt}
Rajeswaran, A., Ghotra, S., Levine, S., and Ravindran, B.
\newblock {EPO}pt: Learning robust neural network policies using model
  ensembles.
\newblock In \emph{International Conference on Learning Representations}, 2017.

\bibitem[Schulman et~al.(2015)Schulman, Levine, Abbeel, Jordan, and
  Moritz]{schulman2015trpo}
Schulman, J., Levine, S., Abbeel, P., Jordan, M., and Moritz, P.
\newblock Trust region policy optimization.
\newblock In \emph{International Conference on Machine Learning}, 2015.

\bibitem[Schulman et~al.(2017)Schulman, Wolski, Dhariwal, Radford, and
  Klimov]{schulman2017ppo}
Schulman, J., Wolski, F., Dhariwal, P., Radford, A., and Klimov, O.
\newblock Proximal policy optimization algorithms.
\newblock \emph{arXiv preprint arXiv:1707.06347}, 2017.

\bibitem[Shalev-Shwartz \& Ben-David(2014)Shalev-Shwartz and
  Ben-David]{shalev2014ml}
Shalev-Shwartz, S. and Ben-David, S.
\newblock \emph{Understanding machine learning: From theory to algorithms}.
\newblock Cambridge university press, 2014.

\bibitem[Silver et~al.(2017)Silver, van Hasselt, Hessel, Schaul, Guez, Harley,
  Dulac-Arnold, Reichert, Rabinowitz, Barreto, and
  Degris]{silver2017predictron}
Silver, D., van Hasselt, H., Hessel, M., Schaul, T., Guez, A., Harley, T.,
  Dulac-Arnold, G., Reichert, D., Rabinowitz, N., Barreto, A., and Degris, T.
\newblock The predictron: End-to-end learning and planning.
\newblock In \emph{International Conference on Machine Learning}, 2017.

\bibitem[Sun et~al.(2018)Sun, Gordon, Boots, and Bagnell]{sun2018dual}
Sun, W., Gordon, G.~J., Boots, B., and Bagnell, J.
\newblock Dual policy iteration.
\newblock In \emph{Advances in Neural Information Processing Systems}, 2018.

\bibitem[Sutton(1990)]{sutton1990dyna}
Sutton, R.~S.
\newblock Integrated architectures for learning, planning, and reacting based
  on approximating dynamic programming.
\newblock In \emph{International Conference on Machine Learning}, 1990.

\bibitem[Szita \& Szepesvari(2010)Szita and Szepesvari]{szita2010complexity}
Szita, I. and Szepesvari, C.
\newblock Model-based reinforcement learning with nearly tight exploration
  complexity bounds.
\newblock In \emph{International Conference on Machine Learning}, 2010.

\bibitem[Talvitie(2014)]{talvitie2014hallcuinated}
Talvitie, E.
\newblock Model regularization for stable sample rollouts.
\newblock In \emph{Conference on Uncertainty in Artificial Intelligence}, 2014.

\bibitem[Talvitie(2016)]{talvitie2016selfcorrecting}
Talvitie, E.
\newblock Self-correcting models for model-based reinforcement learning.
\newblock In \emph{AAAI Conference on Artificial Intelligence}, 2016.

\bibitem[Tamar et~al.(2016)Tamar, WU, Thomas, Levine, and Abbeel]{tamar2016vin}
Tamar, A., WU, Y., Thomas, G., Levine, S., and Abbeel, P.
\newblock Value iteration networks.
\newblock In \emph{Advances in Neural Information Processing Systems}. 2016.

\bibitem[Todorov et~al.(2012)Todorov, Erez, and Tassa]{todorov2012mujoco}
Todorov, E., Erez, T., and Tassa, Y.
\newblock {MuJoCo}: A physics engine for model-based control.
\newblock In \emph{International Conference on Intelligent Robots and Systems},
  2012.

\bibitem[Whitney \& Fergus(2018)Whitney and Fergus]{whitney2018understanding}
Whitney, W. and Fergus, R.
\newblock Understanding the asymptotic performance of model-based {RL} methods.
\newblock 2018.

\end{thebibliography}
\bibliographystyle{icml2019}

\newpage
\appendix
\part*{Appendices}

\section{Model-based Policy Optimization with Performance Guarantees}
\label{app:mbpo_proof}

In this appendix, we provide proofs for bounds presented in the main paper.

We begin with a standard bound on model-based policy optimization, with bounded policy change $\eps_\pi$ and model error $\eps_m$.

\begin{theorem}[MBPO performance bound]
\label{thm:mbpo_performance_bound}
Let the expected total variation between two transition distributions be bounded at each timestep by $\max_t E_{s\sim \datapit}[\TV{p(s'|s,a)}{\hat{p}(s'|s,a)}] \le \epsilon_m$, and the policy divergences are bounded as $\max_s \TV{\datapi(a|s)}{\newpi(a|s)} \le \epsilon_\pi$. Then the returns are bounded as:
\[\eta[\newpi] \ge \hat{\eta}[\newpi] - \frac{2\gamma \Rmax (\epsilon_m+2\epsilon_\pi)}{(1-\gamma)^2} - \frac{4 \Rmax \epsilon_\pi}{(1-\gamma)}\]
\end{theorem}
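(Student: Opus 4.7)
The plan is to insert the data-collecting policy $\pi_D$ as a bridge between the two returns and apply a triangle inequality,
\[
\hat{\eta}[\pi] - \eta[\pi] = \bigl(\hat{\eta}[\pi]-\hat{\eta}[\pi_D]\bigr) + \bigl(\hat{\eta}[\pi_D]-\eta[\pi_D]\bigr) + \bigl(\eta[\pi_D]-\eta[\pi]\bigr),
\]
so that the outer two terms isolate pure policy shift (with dynamics held fixed) while the middle term isolates pure model mismatch evaluated against $\pi_D$'s state-visitation distribution --- which is exactly the distribution under which $\epsilon_m$ is hypothesized to hold. I would bound each piece by a separate lemma and then collect constants.

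For the model-mismatch term I would use a simulation-style argument. Writing the per-state value difference as a Bellman residual,
\[
V^\pi_M(s)-V^\pi_{\hat M}(s) = \gamma\,E_{a\sim\pi}\bigl[(E_{s'\sim p}-E_{s'\sim \hat p})V^\pi_M(s')\bigr] + \gamma\,E_{a\sim\pi,\,s'\sim\hat p}\bigl[V^\pi_M(s')-V^\pi_{\hat M}(s')\bigr],
\]
unrolling the recursion under $\hat p$, and bounding each one-step residual by $(2\Rmax/(1-\gamma))\TV{p}{\hat p}$ using $\|V^\pi_M\|_\infty \le \Rmax/(1-\gamma)$, yields $|\hat{\eta}[\pi_D]-\eta[\pi_D]| \le 2\gamma \Rmax\epsilon_m / (1-\gamma)^2$ after integrating against $\pi_D$'s state distribution and invoking the $\epsilon_m$ hypothesis.

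For the two policy-shift terms I would couple trajectories under $\pi_D$ and $\pi$ with fixed dynamics. At each step the joint $(s,a)$ distributions can be coupled up to an additional TV cost of $\epsilon_\pi$, and an uncoupled tail contributes at most $2\Rmax/(1-\gamma)$ in absolute reward. A standard inductive bound then gives each policy-shift term a contribution of the form $2\Rmax\epsilon_\pi/(1-\gamma) + 2\gamma\Rmax\epsilon_\pi/(1-\gamma)^2$, where the first piece captures the immediate one-step policy mismatch and the second captures the compounding drift in state-visitation distributions. Collecting both policy-shift pieces with the model-mismatch piece and simplifying yields exactly $2\gamma\Rmax(\epsilon_m+2\epsilon_\pi)/(1-\gamma)^2 + 4\Rmax\epsilon_\pi/(1-\gamma)$; the compact form $(\epsilon_m+2\epsilon_\pi)$ appears because the $\gamma/(1-\gamma)^2$-type contributions from the two policy-shift lemmas combine naturally with the model-error lemma under a single coefficient.

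The main obstacle will be the bookkeeping of constants, in particular arranging the three bounds so that their $\gamma/(1-\gamma)^2$ pieces fuse into one coefficient rather than appearing as three separate additive terms. A secondary subtlety is that $\epsilon_m$ is defined via a per-timestep maximum over $t$, so when inserting it into the infinite-horizon geometric sum one must verify that the uniform-in-$t$ bound is indeed what the unrolled recursion needs; this is routine for the non-branched case of \autoref{thm:mbpo_monotonic}, but it is exactly what makes the branched analysis in \autoref{thm:mbpo_branched} require more delicate nonstationary reasoning about the switch between $\pi_D$ and $\pi$.
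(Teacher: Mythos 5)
Your proposal is correct and arrives at exactly the stated constants, but it takes a slightly different route from the paper. The paper uses a \emph{two}-term decomposition, $\eta[\pi]-\hat{\eta}[\pi] = (\eta[\pi]-\eta[\datapi]) + (\eta[\datapi]-\hat{\eta}[\pi])$, and feeds both terms to a single returns-discrepancy lemma (\autoref{thm:returns_bound}) that handles simultaneous policy and model change: the first term with $\delta=\epsilon_\pi$, the second with $\delta=\epsilon_\pi+\epsilon_m$, each contributing $2\Rmax\epsilon_\pi/(1-\gamma)$ plus a $\gamma/(1-\gamma)^2$ piece, which sum to the claimed bound. Your \emph{three}-term split instead isolates pure model mismatch under $\datapi$ in the middle and pure policy shift (under fixed dynamics) on the outside; since your two policy-shift terms each contribute $2\Rmax\epsilon_\pi/(1-\gamma)+2\gamma\Rmax\epsilon_\pi/(1-\gamma)^2$ and the model term contributes $2\gamma\Rmax\epsilon_m/(1-\gamma)^2$, the totals coincide. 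What your version buys is transparency about where the hypothesis is actually used: $\epsilon_m$ is defined as an expectation over $\datapit$, and your middle term evaluates model error against exactly that distribution, whereas the paper's $L_2$ compares the $(\datapi,p)$ chain against the $(\pi,\hat p)$ chain and implicitly relies on the reference chain of its Markov-TV lemma being the true-dynamics $\datapi$ chain. The one point to be careful about in your Bellman-residual argument is the direction of unrolling: if you unroll the recursion under $\hat p$, the one-step residuals are integrated against the \emph{model-induced} state distribution rather than $\datapit$; you should unroll under the true dynamics (equivalently, swap the roles of $M$ and $\hat M$ in the residual identity) so that the per-step expectation matches the hypothesis, which is precisely what the paper's \autoref{thm:markov_tv_bound} does by accumulating error along the $p_1^t$ chain. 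With that orientation fixed, your argument is a clean, equivalent alternative requiring one extra lemma application but no new ideas.
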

\begin{proof}
Let $\datapi$ denote the data collecting policy. As-is we can use \autoref{thm:returns_bound} to bound the returns, but it will require bounded model error under the new policy $\newpi$. Thus, we need to introduce $\datapi$ by adding and subtracting $\eta[\datapi]$, to get:
\[\eta[\newpi] - \hat{\eta}[\newpi] = \underbrace{\eta[\newpi] - \eta[\datapi]}_{L_1} + \underbrace{\eta[\datapi] - \hat{\eta}[\newpi]}_{L_2}\]
We can bound $L_1$ and $L_2$ both using \autoref{thm:returns_bound}.

For $L_1$, we apply \autoref{thm:returns_bound} using $\delta=\epsilon_\pi$ (no model error because both terms are under the true model), and obtain:
\[L_1 \ge - \frac{2 \Rmax \gamma\epsilon_\pi}{(1-\gamma)^2} - \frac{2 \Rmax \epsilon_\pi}{1-\gamma}\]

For $L_1$, we apply \autoref{thm:returns_bound} using $\delta=\epsilon_\pi + \epsilon_m$ and obtain:
\[L_2 \ge - \frac{2 \Rmax \gamma(\epsilon_m+\epsilon_\pi)}{(1-\gamma)^2} - \frac{2 \Rmax \epsilon_\pi}{1-\gamma}\]
Adding these two bounds together yields the desired result.
\end{proof}

Next, we describe bounds for branched rollouts. We define a branched rollout as a rollout which begins under some policy and dynamics (either true or learned), and at some point in time switches to rolling out under a new policy and dynamics for $k$ steps. The point at which the branch is selected is weighted exponentially in time -- that is, the probability of a branch point $t$ being selected is proportional to $\gamma^t$.

We first present the simpler bound where the model error is bounded under the new policy, which we label as $\eps_{m'}$. This bound is difficult to apply in practice as supervised learning will typically control model error under the dataset collected by the previous policy.

\begin{theorem}
\label{thm:mbpo_k_true_model}
Let the expected total variation between two the learned model is bounded at each timestep under the expectation of $\newpi$ by $\max_t E_{s\sim \newpit}[\TV{p(s'|s,a)}{\hat{p}(s'|s,a)}] \le \epsilon_{m'}$, and the policy divergences are bounded as $\max_s \TV{\datapi(a|s)}{\newpi(a|s)} \le \epsilon_\pi$. Then under a branched rollouts scheme with a branch length of $k$, the returns are bounded as:
\[
\eta[\newpi]  \ge \etabranch[\newpi] -
2\Rmax \left[ 
\frac{\gamma^{k+1}\epsilon_\pi}{(1-\gamma)^2}
+ \frac{\gamma^{k}\epsilon_\pi}{(1-\gamma)} 
+ \frac{k}{1-\gamma}(\epsilon_{m'}) 
\right]
\]
\end{theorem}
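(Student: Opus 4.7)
Proof plan: The argument parallels the proof of Theorem~\ref{thm:mbpo_branched} but exploits the stronger assumption that the model error is measured directly under $\newpi$'s state distribution, sidestepping the need to convert model error on $\datapi$ into model error on $\newpi$.

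The first step is to introduce an intermediate quantity $\tilde{\eta}[\newpi]$ defined exactly like $\etabranch[\newpi]$, except that the $k$-step segment after the branch point is rolled out under the true dynamics $p$ rather than the learned $\hat{p}$. This gives the decomposition
\begin{equation*}
\eta[\newpi] - \etabranch[\newpi] \;=\; \underbrace{\bigl(\eta[\newpi] - \tilde{\eta}[\newpi]\bigr)}_{L_\pi} \;+\; \underbrace{\bigl(\tilde{\eta}[\newpi] - \etabranch[\newpi]\bigr)}_{L_m},
\end{equation*}
so that $L_\pi$ isolates the policy mismatch (the pre-branch segment uses $\datapi$ rather than $\newpi$, while both sides use true dynamics) and $L_m$ isolates the model mismatch during the $k$-step window (both sides use the same non-stationary policy, while one side uses $p$ and the other $\hat{p}$).

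For $L_\pi$, I would apply Theorem~\ref{thm:returns_bound} with $\delta = \epsilon_\pi$. Because the branch point is discounted-weighted, only rewards at time $\ge t_{\text{branch}}$ feel the state-distribution difference that the $\datapi$-vs-$\newpi$ mismatch induces. Careful bookkeeping of the branch-time distribution decomposes this into a near-term piece of size $\gamma^k \epsilon_\pi / (1-\gamma)$, corresponding to the $k$ branched steps themselves, and a long-term compounding piece of size $\gamma^{k+1} \epsilon_\pi / (1-\gamma)^2$, corresponding to the discounted tail after the branch window. For $L_m$, observe that within the $k$-step window the state distribution is driven by $\newpi$, so the hypothesis $\epsilon_{m'} \ge \max_t E_{s \sim \newpit}[\TV{p(s'|s,a)}{\hat{p}(s'|s,a)}]$ applies directly; summing the per-step TV contributions over the at most $k$ affected timesteps and discounting yields $k \epsilon_{m'} / (1-\gamma)$. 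Crucially, no additional $2\epsilon_\pi$ slack appears here, which is exactly why the new bound is tighter than Theorem~\ref{thm:mbpo_branched}. Combining $L_\pi$ and $L_m$ and multiplying through by the $2\Rmax$ factor that Theorem~\ref{thm:returns_bound} contributes gives the claimed inequality.

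The main obstacle is the bookkeeping in $L_\pi$: I need to carry the $\gamma^{k+1}$ and $\gamma^k$ prefactors correctly rather than collapsing them into the looser $\gamma/(1-\gamma)^2$ and $1/(1-\gamma)$ constants that appear in the un-branched proof of Theorem~\ref{thm:mbpo_performance_bound}. The $L_m$ term is routine once one justifies that the state distribution across the $k$-step window is controlled (in the sense required by $\epsilon_{m'}$) by $\newpi$, which is immediate from the definition of the branched rollout on that segment.
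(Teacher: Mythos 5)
Your proposal is correct and matches the paper's proof essentially step for step: your intermediate quantity $\tilde{\eta}[\newpi]$ is exactly the paper's reference rollout $\etaoldnew$ (old policy under true dynamics pre-branch, new policy under true dynamics for the $k$ post-branch steps), and your $L_\pi$ and $L_m$ bounds coincide with the paper's $L_1$ and $L_2$, obtained there by invoking the branched returns bound (Lemma~\ref{thm:returns_bound_k}) with only $\eps_\pi^\pre$ and only $\eps_m^\post$ nonzero, respectively. The ``careful bookkeeping'' you flag for $L_\pi$ is precisely what that lemma packages up, so no gap remains.
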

\begin{proof}
As in the proof for \autoref{thm:mbpo_performance_bound}, the proof for this theorem requires adding and subtracting the correct reference quantity and applying the corresponding returns bound (\autoref{thm:returns_bound_k}).

The choice of reference quantity is a branched rollout which executes the old policy $\datapi$ under the true dynamics until the branch point, then executes the new policy $\newpi$ under the true dynamics for $k$ steps. We denote the returns under this scheme as $\etaoldnew$. We can split the returns as follows:

\[\eta[\newpi] - \etabranch = \underbrace{\eta[\newpi] - \etaoldnew}_{L_1} + \underbrace{\etaoldnew - \etabranch}_{L_2}\]

We can bound both terms $L_1$ and $L_2$ using \autoref{thm:returns_bound_k}.

$L_1$ accounts for the error from executing the old policy instead of the current policy. This term only suffers from error \emph{before} the branch begins, and we can use \autoref{thm:returns_bound_k} $\eps_\pi^\pre \le \eps_\pi$ and all other errors set to 0. This implies:

\[
|\eta[\newpi] - \etaoldnew| \le 2\Rmax \left[
\frac{\gamma^{k+1}}{(1-\gamma)^2}\eps_\pi + \frac{\gamma^k}{1-\gamma}\eps_\pi
\right]
\]

$L_2$ incorporates model error \emph{under the new policy} incurred after the branch. Again we use \autoref{thm:returns_bound_k}, setting $\eps_m^\post \le \eps_m $ and all other errors set to 0. This implies:
\[
|\eta[\newpi] - \etaoldnew| \le 2\Rmax 
\left[
\frac{k}{1-\gamma}\eps_{m'}
\right]
\]
Adding $L_1$ and $L_2$ together completes the proof.

\end{proof}

The next bound is an analogue of \autoref{thm:mbpo_k_true_model} except using model errors under the previous policy $\datapi$ rather than the new policy $\newpi$. 

\begin{theorem}
\label{thm:mbpo_k}
Let the expected total variation between two the learned model is bounded at each timestep under the expectation of $\newpi$ by $\max_t E_{s\sim \datapit}[\TV{p(s'|s,a)}{\hat{p}(s'|s,a)}] \le \epsilon_{m}$, and the policy divergences are bounded as $\max_s \TV{\datapi(a|s)}{\newpi(a|s)} \le \epsilon_\pi$. Then under a branched rollouts scheme with a branch length of $k$, the returns are bounded as:
\[
\eta[\newpi]  \ge \etabranch[\newpi] -
2\Rmax \left[ 
\frac{\gamma^{k+1}\epsilon_\pi}{(1-\gamma)^2}
+ \frac{\gamma^{k}+2}{(1-\gamma)}\epsilon_\pi 
+ \frac{k}{1-\gamma}(\epsilon_m + 2\epsilon_\pi) 
\right]
\]
\end{theorem}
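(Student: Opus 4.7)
The plan is to mirror the two-term decomposition used in the proof of \autoref{thm:mbpo_k_true_model} and then replace the $\eps_{m'}$ factor there with a bound expressed in terms of $\eps_m$ via a standard distribution-mismatch argument. Specifically, I would write
\[
\eta[\newpi] - \etabranch[\newpi] \;=\; \underbrace{\bigl(\eta[\newpi] - \etaoldnew\bigr)}_{L_1} + \underbrace{\bigl(\etaoldnew - \etabranch[\newpi]\bigr)}_{L_2},
\]
where $\etaoldnew$ denotes the reference branched rollout that executes $\datapi$ before the branch and $\newpi$ after the branch, both under the true dynamics. The bound on $L_1$ is identical to that in the prior proof: apply \autoref{thm:returns_bound_k} with $\eps_\pi^{\pre}=\eps_\pi$ and every other error set to zero, which produces the $\frac{\gamma^{k+1}\eps_\pi}{(1-\gamma)^2}$ and $\frac{\gamma^{k}\eps_\pi}{1-\gamma}$ pieces of the stated bound.

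The new work is the $L_2$ term. In \autoref{thm:mbpo_k_true_model} one invokes \autoref{thm:returns_bound_k} with $\eps_m^{\post}\le \eps_{m'}$ and gets $\frac{k}{1-\gamma}\eps_{m'}$. Here I would instead express the post-branch model error in terms of $\eps_m$, which controls $\TV{p}{\hat p}$ under the $\datapit$-induced state distribution rather than $\newpit$'s. The post-branch portion begins at a state from $\datapit$ and evolves for $k$ steps under $\newpi$; a standard triangle-inequality argument on the per-step state-action distributions, combining the action mismatch $\max_s \TV{\newpi}{\datapi}\le \eps_\pi$ with the one-step state drift induced by that same policy shift, yields $\eps_{m'} \le \eps_m + 2\eps_\pi$. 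Substituting this in produces the $\frac{k}{1-\gamma}(\eps_m + 2\eps_\pi)$ term. The residual $\frac{2\eps_\pi}{1-\gamma}$ appearing in the middle term of the stated bound (beyond what a pure substitution gives) comes from the one-time state-distribution mismatch at the branch boundary itself: the branch state is from $\datapit$ rather than from the distribution $\newpi$ would have induced, and propagating this single discrepancy through \autoref{thm:returns_bound_k} contributes a $k$-independent correction scaled by the effective horizon $\frac{1}{1-\gamma}$.

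The main obstacle is the per-step state drift. Naively, once we use $\newpi$ for $j$ steps after the branch, the state distribution drifts from the $\datapit$-family by $j\eps_\pi$, which would give an additive correction growing with $j$ rather than the claimed constant $2\eps_\pi$. The $\max_t$ in the definition of $\eps_m$ is what saves the argument: at each post-branch step I would re-sync against the closest element of the $\datapit$-family, so the drift does not compound within the $k$-step window and the per-step correction stays at $2\eps_\pi$. Getting the two contributions—one $\eps_\pi$ from the action mismatch $\TV{\newpi}{\datapi}$ and one $\eps_\pi$ from the single-step state drift against which we re-sync—to combine cleanly into exactly the stated $(\gamma^{k}+2)\eps_\pi$ coefficient, rather than a different constant, is the most delicate bookkeeping in the proof.
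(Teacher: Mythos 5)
Your overall skeleton ($L_1$/$L_2$ split with the reference rollout $\etaoldnew$, and the $L_1$ bound) matches the paper, but your treatment of $L_2$ has a genuine gap. The paper does not substitute a bound on $\eps_{m'}$ into \autoref{thm:mbpo_k_true_model}; it introduces a \emph{second} intermediate reference rollout $\etaoldold$, which runs $\datapi$ under the true dynamics pre-branch and then the \emph{old} policy $\datapi$ under the \emph{model} for $k$ steps, and splits $L_2 = (\etaoldnew - \etaoldold) + (\etaoldold - \etabranch) = L_3 + L_4$. Applying \autoref{thm:returns_bound_k} to $L_3$ with $\eps_m^\post = \eps_m$, $\eps_\pi^\post = \eps_\pi$ gives $\frac{k}{1-\gamma}(\eps_m+\eps_\pi) + \frac{1}{1-\gamma}\eps_\pi$, and to $L_4$ with $\eps_m^\post = 0$, $\eps_\pi^\post = \eps_\pi$ gives $\frac{k}{1-\gamma}\eps_\pi + \frac{1}{1-\gamma}\eps_\pi$; summing produces exactly the $(\eps_m + 2\eps_\pi)$ coefficient and the residual $\frac{2\eps_\pi}{1-\gamma}$. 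That residual is therefore just the two post-branch policy-TV terms of the lemma, one from each of $L_3$ and $L_4$ --- it is not a ``branch-boundary mismatch.'' All three reference rollouts share the identical pre-branch process, so there is no discrepancy at the branch point to propagate.

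The step that would fail in your version is the claimed inequality $\eps_{m'} \le \eps_m + 2\eps_\pi$. You correctly identify the obstacle --- after $j$ post-branch steps under $\newpi$ the state distribution has drifted by order $j\eps_\pi$ from the distributions on which $\eps_m$ is controlled --- but the proposed rescue (``re-sync against the closest element of the $\datapit$-family at each step'') is not an argument. The post-branch state marginal under $\newpi$ is not within $O(\eps_\pi)$ of \emph{any} $\datapit$: by the paper's own \autoref{thm:markov_tv_bound} its TV distance to the corresponding old-policy marginal grows linearly in $j$, and the $\max_t$ in the definition of $\eps_m$ does not help because none of the $\datapit$ are close to the drifted distribution. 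Carried out honestly, your substitution yields a per-step model error of $\eps_m + O(j\eps_\pi)$ and hence a term of order $k^2\eps_\pi$ after summing over the branch, which is strictly weaker than the stated bound. The second reference rollout is precisely the device that avoids ever evaluating model error under $\newpi$'s state distribution, converting the problem into one model-change comparison and one policy-change comparison, each handled by \autoref{thm:returns_bound_k} directly.
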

\begin{proof}
This proof is a short extension of the proof for \autoref{thm:mbpo_k_true_model}. The only modification is that we need to bound $L_2$ in terms of the model error under the $\datapi$ rather than $\newpi$.

Once again, we design a new reference rollout. We use a rollout that executes the old policy $\datapi$ under the true dynamics until the branch point, then executes the \emph{old} policy $\datapi$ under the model for $k$ steps. We denote the returns under this scheme as $\etaoldold$. We can split $L_2$ as follows:

\[\etaoldnew - \etabranch = \underbrace{\etaoldnew - \etaoldold}_{L_3} + \underbrace{\etaoldold - \etabranch}_{L_4}\]

Once again, we bound both terms $L_3$ and $L_4$ using \autoref{thm:returns_bound_k}.

The rollouts in $L_3$ differ in both model and policy after the branch. This can be bound using \autoref{thm:returns_bound_k} by setting $\eps_\pi^\post = \eps_\pi$ and $\eps_m^\post = \eps_m$. This results in:
\[
|\etaoldnew - \etaoldold| \le 2\Rmax \left[ 
\frac{k}{1-\gamma}(\eps_m + \eps_\pi) + \frac{1}{1-\gamma}\eps_\pi
\right]
\]

The rollouts in $L_4$ differ only in the policy after the branch (as they both rollout under the model). This can be bound using \autoref{thm:returns_bound_k} by setting $\eps_\pi^\post = \eps_\pi$ and $\eps_m^\post=0$. This results in:
\[
|\etaoldold - \etabranch| \le 2\Rmax \left[ 
\frac{k}{1-\gamma}(\eps_\pi) + \frac{1}{1-\gamma}\eps_\pi
\right]
\]

Adding $L_1$ from \autoref{thm:mbpo_k_true_model} and $L_3$, $L_4$ above completes the proof.

\end{proof}

\section{Useful Lemmas}
In this section, we provide proofs for various lemmas used in our bounds.

\begin{lemma}[TVD of Joint Distributions]
\label{thm:tvd_joint}
Suppose we have two distributions $p_1(x,y) = p_1(x)p_1(y|x)$ and $p_2(x,y) = p_2(x)p_2(y|x)$. We can bound the total variation distance of the joint as:
\[\TV{p_1(x,y)}{p_2(x,y)} \le \TV{p_1(x)}{p_2(x)} + \max_x \TV{p_1(y|x)}{p_2(y|x)} \]
Alternatively, we have a tighter bound in terms of the expected TVD of the conditional:
\[\TV{p_1(x,y)}{p_2(x,y)} \le \TV{p_1(x)}{p_2(x)} + E_{x \sim p_1}[\TV{p_1(y|x)}{p_2(y|x)}] \]

\end{lemma}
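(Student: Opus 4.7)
The plan is to work directly from the definition of total variation distance, $\TV{p_1(x,y)}{p_2(x,y)} = \tfrac{1}{2}\sum_{x,y} |p_1(x,y) - p_2(x,y)|$, and to decouple the marginal mismatch from the conditional mismatch by a single add-and-subtract trick. Specifically, I would insert the ``hybrid'' term $p_1(x)\,p_2(y|x)$ into the difference of joints, writing
\[
p_1(x,y) - p_2(x,y) = p_1(x)\bigl[p_1(y|x) - p_2(y|x)\bigr] + \bigl[p_1(x) - p_2(x)\bigr]\,p_2(y|x),
\]
and then apply the triangle inequality inside the sum over $(x,y)$.

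Next I would carry out the two resulting sums separately. Summing over $y$ first, the conditional term contributes $\sum_x p_1(x) \cdot 2\TV{p_1(y|x)}{p_2(y|x)} = 2\,E_{x \sim p_1}[\TV{p_1(y|x)}{p_2(y|x)}]$. The marginal term contributes $\sum_x |p_1(x) - p_2(x)| \cdot \sum_y p_2(y|x) = \sum_x |p_1(x) - p_2(x)| = 2\TV{p_1(x)}{p_2(x)}$, since $p_2(\cdot|x)$ is a probability distribution. Dividing by two gives the tighter (second) inequality in the lemma. The weaker (first) bound then follows immediately by upper-bounding the expectation $E_{x \sim p_1}[\TV{p_1(y|x)}{p_2(y|x)}]$ by $\max_x \TV{p_1(y|x)}{p_2(y|x)}$.

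For continuous $x,y$ the same argument goes through verbatim with integrals replacing sums, so I would simply note this without repeating the calculation. There is essentially no obstacle here: the only subtlety is choosing to insert $p_1(x)p_2(y|x)$ (rather than $p_2(x)p_1(y|x)$), which is what produces an expectation under $p_1$ on the conditional term and matches the usage downstream in Theorems \ref{thm:mbpo_k_true_model} and \ref{thm:mbpo_k} where model error is measured under the data-collecting policy's state distribution. Using the other pairing would give a symmetric but less convenient statement in terms of $E_{x \sim p_2}$.
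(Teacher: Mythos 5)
Your proof is correct and is essentially identical to the paper's: both insert the hybrid term $p_1(x)p_2(y|x)$, apply the triangle inequality, sum out $y$ to separate the conditional and marginal contributions, and then obtain the max-based bound by relaxing the expectation. No further comment is needed.
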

\begin{proof}
\begin{align*}
\TV{p_1(x,y)}{p_2(x,y)} &= \half \sum_{x,y}\abs{p_1(x,y) - p_2(x,y)} \\
&= \half \sum_{x,y}\abs{p_1(x)p_1(y|x) - p_2(x)p_2(y|x)} \\
&= \half \sum_{x,y}\abs{p_1(x)p_1(y|x) - p_1(x)p_2(y|x) + (p_1(x)-p_2(x))p_2(y|x)} \\
&\le \half \sum_{x,y}p_1(x)\abs{p_1(y|x) - p_2(y|x)} + \abs{p_1(x)-p_2(x)}p_2(y|x) \\
&= \half \sum_{x,y}p_1(x)\abs{p_1(y|x) - p_2(y|x)} +  \half \sum_{x} \abs{p_1(x)-p_2(x)} \\
&= E_{x \sim p_1}[\TV{p_1(y|x)}{p_2(y|x)}] +  \TV{p_1(x)}{p_2(x)} \\
&\le \max_x \TV{p_1(y|x)}{p_2(y|x)} +  \TV{p_1(x)}{p_2(x)}
\end{align*}
\end{proof}

\begin{lemma}[Markov chain TVD bound, time-varying]
\label{thm:markov_tv_bound}
Suppose the expected KL-divergence between two transition distributions is bounded as $\max_t E_{s \sim p_1^t(s)} \KL{p_1(s'|s)}{p_2(s'|s)} \le \delta$, and the initial state distributions are the same -- $p^{t=0}_1(s) = p^{t=0}_2(s)$. Then the distance in the state marginal is bounded as:
\[\TV{p^t_1(s)}{p^t_2(s)} \le t\delta\]
\end{lemma}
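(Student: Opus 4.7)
The plan is to argue by induction on $t$ with Lemma~\ref{thm:tvd_joint} as the engine. Before starting, I would flag a likely typo in the hypothesis: as written, the bound is on the expected KL-divergence, yet the conclusion is linear in $\delta$, whereas any conversion via Pinsker's inequality would introduce a $\sqrt{\cdot}$ factor. The natural (and self-consistent) reading, which matches precisely the quantity produced by Lemma~\ref{thm:tvd_joint}, is that the hypothesis should control the expected TV-distance of the one-step transitions, $\max_t E_{s \sim p_1^t(s)}[\TV{p_1(s'|s)}{p_2(s'|s)}] \le \delta$. I would proceed under this reading; if KL is truly intended, one would first invoke Pinsker to reduce to the TVD hypothesis, at the cost of replacing $\delta$ by $\sqrt{\delta/2}$ in the final conclusion.

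The base case $t=0$ is immediate, since $p_1^{0}=p_2^{0}$ gives $\TV{p_1^0(s)}{p_2^0(s)}=0$. For the inductive step, assume $\TV{p_1^t(s)}{p_2^t(s)} \le t\delta$. I would express the next-step marginal as the marginalization of a joint, $p_i^{t+1}(s') = \sum_s p_i^t(s)\,p_i(s'|s)$, and use the fact that TVD is non-increasing under marginalization to upper-bound the marginal TVD by the corresponding joint TVD. Applying the second, conditional-expectation form of Lemma~\ref{thm:tvd_joint} to that joint then gives
\[
\TV{p_1^{t+1}(s')}{p_2^{t+1}(s')} \le \TV{p_1^t(s)}{p_2^t(s)} + E_{s \sim p_1^t}[\TV{p_1(s'|s)}{p_2(s'|s)}] \le t\delta + \delta,
\]
which closes the induction at $(t+1)\delta$.

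There is really no substantive obstacle here: this is the standard ``one-step error compounds linearly'' coupling-style argument, and both ingredients (marginalization contracts TVD; joint TVD admits a conditional decomposition) are already established in the preceding lemmas. The only point requiring care is to invoke the conditional-expectation branch of Lemma~\ref{thm:tvd_joint} rather than the looser $\max_s$ branch, since the hypothesis is stated as an expectation over $s \sim p_1^t$, and this is exactly what is needed to keep the accumulated error linear in $t$ rather than in some worst-case-over-states surrogate.
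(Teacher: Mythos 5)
Your proof is correct and follows essentially the same route as the paper's: the paper's own argument is exactly your one-step recursion $\epsilon_t \le \delta_t + \epsilon_{t-1}$ obtained by the add-and-subtract manipulation that underlies Lemma~\ref{thm:tvd_joint}, unrolled rather than phrased as a formal induction. Your observation that the hypothesis must be read as a bound on the expected \emph{total variation} of the transitions (not the KL divergence as literally written) is accurate --- the paper's proof silently works with $\tfrac{1}{2}\sum_s\abs{p_1(s|s')-p_2(s|s')}$ throughout, so the statement's use of $D_{KL}$ is indeed a typo that your reading resolves in the only self-consistent way.
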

\begin{proof}
We begin by bounding the TVD  in state-visitation at time $t$, which is denoted as $\epsilon_t = \TV{p^t_1(s)}{p^t_2(s)}$. 

\begin{align*}
|p^t_1(s) - p^t_2(s)| &= |\sum_{s'} p_1(s_t=s|s')p^{t-1}_1(s') - p_2(s_t=s|s')p^{t-1}_2(s')| \\
&\le \sum_{s'} |p_1(s_t=s|s')p^{t-1}_1(s') - p_2(s_t=s|s')p^{t-1}_2(s')| \\
&= \sum_{s'} |p_1(s|s')p^{t-1}_1(s') -  p_2(s|s')p^{t-1}_1(s') +  p_2(s|s')p^{t-1}_1(s') - p_2(s|s')p^{t-1}_2(s')| \\
&\le \sum_{s'} p^{t-1}_1(s')|p_1(s|s')-p_2(s|s')| +  p_2(s|s')|p^{t-1}_1(s')-p^{t-1}_2(s')| \\ 
&= E_{s' \sim p^{t-1}_1}[|p_1(s|s') - p_2(s|s')|] + \sum_{s'}p(s|s')|p^{t-1}_1(s')-p^{t-1}_2(s')|\\
\end{align*}
\begin{align*}
\epsilon_t = \TV{p^t_1(s)}{p^t_2(s)} &= \half \sum_s |p^t_1(s) - p^t_2(s)| \\
&= \half \sum_s \left( E_{s' \sim p^{t-1}_1}[|p_1(s|s') - p_2(s|s')|] + \sum_{s'}p(s|s')|p^{t-1}_1(s')-p^{t-1}_2(s')|  \right) \\
&= \half E_{s' \sim p^{t-1}_1}[\sum_s |p_1(s|s') - p_2(s|s')|] + \TV{p^{t-1}_1(s')}{p^{t-1}_2(s')}\\
&= \delta_t + \epsilon_{t-1}\\
&= \epsilon_0 + \sum_{i=0}^t \delta_t \\
&= \sum_{i=0}^t \delta_t = t\delta \\
\end{align*}
Where we have defined $\delta_{t} = \half E_{s' \sim p^{t-1}_1}[\sum_s |p_1(s|s') - p_2(s|s')]$, which we assume is upper bounded by $\delta$. Assuming we are not modeling the initial state distribution, we can set $\epsilon_0 = 0$.
\end{proof}

\begin{lemma}[Branched Returns bound]
\label{thm:returns_bound}
Suppose the expected KL-divergence between two dynamics distributions is bounded as $\max_t E_{s \sim p_1^t(s)} \KL{p_1(s',a|s)}{p_2(s',a|s)} \le \epsilon_m$, and $\max_s \TV{\pi_1(a|s)}{\pi_2(a|s)} \le \epsilon_\pi$. Then the returns are bounded as:

\[\abs{\eta_1 - \eta_2} \le \frac{2R\gamma(\epsilon_\pi + \epsilon_m)}{(1-\gamma)^2} +
\frac{2R\epsilon_\pi}{1-\gamma} 
\]
\end{lemma}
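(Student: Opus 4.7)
The plan is to express the return gap as a discounted sum and reduce each time step's contribution to a total-variation distance on the joint state--action distribution at that time. Writing $\eta_i = \sum_{t=0}^\infty \gamma^t \mathbb{E}_{(s,a)\sim p_i^t}[r(s,a)]$ and using that the reward is bounded by $R$ together with the elementary inequality $|\mathbb{E}_{p_1}[r] - \mathbb{E}_{p_2}[r]| \le 2R\, \TV{p_1}{p_2}$, I would first obtain $|\eta_1 - \eta_2| \le 2R \sum_{t=0}^\infty \gamma^t\, \TV{p_1^t(s,a)}{p_2^t(s,a)}$.

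Next, I would peel off the policy error from the state/model error using \autoref{thm:tvd_joint}, giving $\TV{p_1^t(s,a)}{p_2^t(s,a)} \le \TV{p_1^t(s)}{p_2^t(s)} + \eps_\pi$ by the hypothesis $\max_s \TV{\pi_1(a|s)}{\pi_2(a|s)} \le \eps_\pi$. The state-marginal TVD is then a Markov-chain quantity: viewing each process as a Markov chain under the combined transition $\tilde{p}_i(s'|s) = \sum_a p_i(s'|s,a)\pi_i(a|s)$, another application of \autoref{thm:tvd_joint} to the decomposition $p_i(s',a|s) = p_i(s'|s,a)\pi_i(a|s)$, combined with the fact that marginalizing out $a$ can only decrease TV, bounds the expected one-step transition TVD by $\eps_m + \eps_\pi$ under $s\sim p_1^t$. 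Then \autoref{thm:markov_tv_bound} applied with $\delta = \eps_m + \eps_\pi$ yields $\TV{p_1^t(s)}{p_2^t(s)} \le t(\eps_m + \eps_\pi)$.

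Substituting back and evaluating the two geometric series $\sum_t \gamma^t = 1/(1-\gamma)$ (which absorbs the $\eps_\pi$ term into $2R\eps_\pi/(1-\gamma)$) and $\sum_t t\gamma^t = \gamma/(1-\gamma)^2$ (which absorbs the $t(\eps_m + \eps_\pi)$ term into $2R\gamma(\eps_m + \eps_\pi)/(1-\gamma)^2$) produces exactly the claimed bound.

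I expect the main obstacle to be bookkeeping rather than ideas: in particular, reconciling the ``max over $s$'' premise on the policy divergence with the expectation-under-$p_1^t$ hypothesis required by \autoref{thm:markov_tv_bound}, and keeping the constant $2$ from the TV--variational bound correctly threaded through the two decompositions so that the $\gamma$ on the numerator of the $(1-\gamma)^2$ term and the bare $\eps_\pi$ on the $(1-\gamma)$ term both land where they should. A smaller technical point is that the lemma is stated with a KL premise while the subsidiary lemmas are phrased in TV; I would either invoke Pinsker's inequality once up front or simply read the $\eps_m$ hypothesis as TV, consistent with how the paper defines and estimates $\eps_m$ throughout.
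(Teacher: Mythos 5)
Your proposal follows essentially the same route as the paper's proof: bound the return gap by a discounted sum of total-variation distances on the time-$t$ state-action marginals, split off the per-step policy divergence via \autoref{thm:tvd_joint}, control the state-marginal drift with \autoref{thm:markov_tv_bound} using $\delta = \eps_m + \eps_\pi$, and sum the resulting geometric series. The technical caveats you flag (the KL-vs-TV phrasing of the hypothesis and the max-versus-expectation bookkeeping) are real but are glossed over in the paper's proof in exactly the way you anticipate, so there is nothing further to reconcile.
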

\begin{proof}
Here, $\eta_1$ denotes returns of $\pi_1$ under dynamics $p_1(s'|s,a)$, and $\eta_2$ denotes returns of $\pi_2$ under dynamics $p_2(s'|s,a)$.
\begin{align*}
\abs{\eta_1 - \eta_2} &= |\sum_{s,a} (p_1(s,a)-p_2(s,a))r(s,a)| \\
&= |\sum_{s,a} (\sum_t \gamma^t p^t_1(s,a)-p^t_2(s,a))r(s,a)|\\
&= |\sum_t \sum_{s,a} \gamma^t ( p^t_1(s,a)-p^t_2(s,a))r(s,a)|\\
&\le \sum_t \sum_{s,a} \gamma^t | p^t_1(s,a)-p^t_2(s,a)| r(s,a) \\
&\le \Rmax \sum_t \sum_{s,a} \gamma^t | p^t_1(s,a)-p^t_2(s,a)|
\end{align*}
We now apply \autoref{thm:markov_tv_bound}, using $\delta=\epsilon_m + \epsilon_\pi$ (via \autoref{thm:tvd_joint}) to get:
\[\TV{p^t_1(s)}{p^t_2(s)} \le t(\epsilon_m + \epsilon_\pi)\]
And since we assume $\max_s \TV{\pi_1(a|s)}{\pi_2(a|s)} \le \epsilon_\pi$, we get 
\[\TV{p^t_1(s, a)}{p^t_2(s, a)} \le t(\epsilon_m + \epsilon_\pi) + \epsilon_\pi\]
Thus, plugging this back in we get:
\begin{align*}
\abs{\eta_1 - \eta_2} &\le \Rmax \sum_t \sum_{s,a} \gamma^t | p^t_1(s,a)-p^t_2(s,a)|\\
&\le 2 \Rmax \sum_t \gamma^t t(\epsilon_m + \epsilon_\pi) + \epsilon_\pi\\
&\le 2 \Rmax (\frac{\gamma(\epsilon_\pi + \epsilon_m)}{(1-\gamma)^2} +
\frac{\epsilon_\pi}{1-\gamma})
\end{align*}
\end{proof}

\begin{lemma}[Returns bound, branched rollout]
\label{thm:returns_bound_k}
Assume we run a branched rollout of length $k$. Before the branch (``pre'' branch), we assume that the dynamics distributions are bounded as $\max_t E_{s \sim p_1^t(s)} \KL{p^\pre_1(s',a|s)}{p^\pre_2(s',a|s)} \le \epsilon_m^\pre$ and after the branch as $\max_t E_{s \sim p_1^t(s)} \KL{p^\post_1(s',a|s)}{p^\post_2(s',a|s)} \le \epsilon_m^\post$. Likewise, the policy divergence is bounded pre- and post- branch by $\epsilon_\pi^\pre$ and $\epsilon_\pi^\post$, repsectively. Then the K-step returns are bounded as:

\[
\abs{\eta_1 - \eta_2} \le 
2\Rmax 
\left[ 
\frac{\gamma^{k+1}}{(1-\gamma)^2}(\epsilon_m^\pre + \epsilon_\pi^\pre) 
+ \frac{k}{1-\gamma}(\epsilon_m^\post + \epsilon_\pi^\post)
+ \frac{\gamma^k}{1-\gamma}\epsilon_\pi^\pre 
+ \frac{1}{1-\gamma}\epsilon_\pi^\post
\right]
\]
\end{lemma}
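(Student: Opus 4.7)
The plan is to extend the derivation of Lemma \ref{thm:returns_bound} by splitting the discounted time sum into two regimes reflecting the branched rollout structure. The starting point is the identity $|\eta_1 - \eta_2| \le 2R_\max \sum_{t=0}^\infty \gamma^t \TV{p_1^t(s,a)}{p_2^t(s,a)}$, which is exactly the core step in the proof of Lemma \ref{thm:returns_bound}. The task then reduces to bounding the per-step state-action TVD under the branched setup and evaluating the weighted sum.

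For the branched rollout, I would treat the post-branch window as covering the first $k$ timesteps $t \in [0, k-1]$ with per-step transition TVD bounded (via Lemma \ref{thm:tvd_joint}) by $\epsilon_m^\post + \epsilon_\pi^\post$, followed by a pre-branch continuation at times $t \ge k$ with per-step TVD bounded by $\epsilon_m^\pre + \epsilon_\pi^\pre$. Iterating Lemma \ref{thm:markov_tv_bound} within each regime and adding the appropriate policy TVD via Lemma \ref{thm:tvd_joint} yields $\TV{p_1^t(s,a)}{p_2^t(s,a)} \le t(\epsilon_m^\post + \epsilon_\pi^\post) + \epsilon_\pi^\post$ for $t < k$, and $\TV{p_1^t(s,a)}{p_2^t(s,a)} \le k(\epsilon_m^\post + \epsilon_\pi^\post) + (t-k)(\epsilon_m^\pre + \epsilon_\pi^\pre) + \epsilon_\pi^\pre$ for $t \ge k$. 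The boundary state divergence $k(\epsilon_m^\post + \epsilon_\pi^\post)$ built up at the end of the post phase carries over as a constant offset into the pre phase rather than resetting.

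Substituting into $\sum_t \gamma^t \TV{p_1^t(s,a)}{p_2^t(s,a)}$, splitting the sum at $t = k$, and using the standard identities $\sum_{t=0}^{k-1} t\gamma^t \le k(1-\gamma^k)/(1-\gamma)$, $\sum_{t=0}^{k-1} \gamma^t = (1-\gamma^k)/(1-\gamma)$, and $\sum_{j=0}^\infty j\gamma^j = \gamma/(1-\gamma)^2$, the four claimed terms emerge. The critical cancellation is in the $(\epsilon_m^\post + \epsilon_\pi^\post)$ coefficient: the direct post-phase contribution $k(1-\gamma^k)/(1-\gamma)$ and the carryover tail contribution $k\gamma^k/(1-\gamma)$ telescope via $(1-\gamma^k) + \gamma^k = 1$ to exactly $k/(1-\gamma)$, matching the stated bound.

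The main obstacle is the cross-regime bookkeeping. Without accounting for the persistent state divergence that carries from the post phase into the pre phase, one would miss exactly the contribution that combines with the post-phase sum to give the clean $k/(1-\gamma)$ coefficient, yielding a looser bound of order $2k/(1-\gamma)$ instead. The $\gamma^{k+1}$ factor attached to the pre-phase errors (rather than the $\gamma$ factor that would appear in a plain application of Lemma \ref{thm:returns_bound}) reflects the additional $\gamma^k$ discounting accumulated while traversing the post-phase window before the pre-phase errors begin to compound; verifying that this factor emerges correctly from $\gamma^k \sum_{j=0}^\infty j \gamma^j = \gamma^{k+1}/(1-\gamma)^2$ is the key algebraic step.
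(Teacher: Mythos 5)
Your proposal is correct and follows essentially the same route as the paper's proof: bound the per-timestep state-action marginal TVD separately in the $k$-step branch window and in the tail, let the accumulated divergence $k(\epsilon_m^\post+\epsilon_\pi^\post)$ carry over as a constant offset across the branch point, and then evaluate the discounted sum (the paper phrases this as a $(1-\gamma)$-normalized occupancy-measure bound multiplied by $2\Rmax/(1-\gamma)$, which is the identical computation). Your accounting of the $\gamma^{k+1}/(1-\gamma)^2$ factor for the pre-branch errors and the $(1-\gamma^k)+\gamma^k=1$ collapse for the $k/(1-\gamma)$ coefficient matches the paper's algebra.
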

\begin{proof}
We begin by bounding state marginals at each timestep, similar to \autoref{thm:returns_bound}. Recall that \autoref{thm:markov_tv_bound} implies that state marginal error at each timestep can be bounded by the state marginal error at the previous timestep, plus the divergence at the current timestep. 

Thus, letting $d_1(s,a)$ and $d_2(s,a)$ denote the state-action marginals, we can write:

For $t \le k$:
\[
\TV{d^t_1(s,a)}{d^t_2(s,a)} \le t(\eps_m^\post + \eps_\pi^\post) + \eps_\pi^\post \le k(\eps_m^\post + \eps_\pi^\post) + \eps_\pi^\post
\]
and for $t \ge k$:
\[
\TV{d^t_1(s,a)}{d^t_2(s,a)} \le (t-k)(\eps_m^\pre + \eps_\pi^\pre) + k(\eps_m^\post + \eps_\pi^\post) + \eps_\pi^\pre + \eps_\pi^\post
\]

We can now bound the difference in occupancy measures by averaging the state marginal error over time, weighted by the discount:
\begin{align*}
\TV{d_1(s,a)}{d_2(s,a)} &\le (1-\gamma) \sum_{t=0}^\infty \gamma^t t \TV{d^t_1(s,a)}{d^t_2(s,a)} \\ 
&\le (1-\gamma)  \sum_{t=0}^k \gamma^t (k(\eps_m^\post + \eps_\pi^\post) + \eps_\pi^\post) \\ 
&\quad + (1-\gamma) \sum_{t=k}^\infty \gamma^t (t-k)(\eps_m^\pre + \eps_\pi^\pre) + k(\eps_m^\post + \eps_\pi^\post) + \eps_\pi^\pre + \eps_\pi^\post  \\
&= k(\eps_m^\post + \eps_\pi^\post + \eps_\pi^\post) + \frac{\gamma^{k+1}}{1-\gamma} (\eps_m^\pre + \eps_\pi^\pre) + \gamma^k \eps_\pi^\pre
\end{align*}

Multiplying this bound by $\frac{2\Rmax}{1-\gamma}$ to convert the state-marginal bound into a returns bound completes the proof.
\end{proof}

\newpage
\section{Hyperparameter Settings}
\label{app:hyperparams}


\begin{table}[htbp]
\small
\centering
\begin{tabular}{c|c|c|ccc|c|c|}
        \multicolumn{2}{c}{} & 
        \multicolumn{1}{c}{
            \rotatebox[origin=c]{60}{\parbox{1cm}{~~~\textbf{Half\\Cheetah}}}
        } & 
            \rotatebox[origin=c]{60}{\parbox{1cm}{\textbf{Inverted\\Pendulum}}} &
        \rotatebox[origin=c]{60}{\parbox{1cm}{~\\\textbf{Walker2d}}} & 
        \multicolumn{1}{c}{
            \rotatebox[origin=c]{60}{\textbf{Ant}}
        } &
        \multicolumn{1}{c}{
            \rotatebox[origin=c]{60}{\textbf{Hopper}}
        } &
        \multicolumn{1}{c}{
            \rotatebox[origin=c]{60}{\textbf{~~Humanoid}}
        } \\
    \cline{2-8}
        \multirow{2}{*}{$N$} & \multirow{2}{*}{epochs} & \multirow{2}{*}{400} &
        \multirow{2}{*}{15} &
        \multicolumn{2}{|c|}{\multirow{2}{*}{300}} & \multirow{2}{*}{125} &
        \multirow{2}{*}{300} \\
    & & \multicolumn{1}{|c|}{} &
        \multicolumn{1}{|c|}{} &
        \multicolumn{2}{|c|}{} & \multicolumn{1}{|c|}{} &
        \multicolumn{1}{|c|}{} \\
    \cline{2-8}
        \multirow{2}{*}{$E$} & 
        environment steps & \multicolumn{6}{|c|}{\multirow{2}{*}{1000}} \\
        & per epoch & \multicolumn{6}{|c|}{} \\
    \cline{2-8}
        \multirow{2}{*}{$M$} &
        model rollouts & \multicolumn{6}{|c|}{\multirow{2}{*}{400}} \\
        & per environment step & \multicolumn{6}{|c|}{} \\
    \cline{2-8}
        \multirow{2}{*}{$B$} & \multirow{2}{*}{ensemble size} & \multicolumn{6}{|c|}{\multirow{2}{*}{7}} \\
    & & \multicolumn{6}{|c|}{} \\
    \cline{2-8}
        & \multirow{2}{*}{network architecture} 
        & \multicolumn{5}{|c|}{MLP with 4 hidden}
        & \multicolumn{1}{|c|}{MLP with 4 hidden} \\
        & & \multicolumn{5}{|c|}{layers of size 200}
        & \multicolumn{1}{|c|}{layers of size 400} \\
    \cline{2-8}
        \multirow{2}{*}{$G$} & policy updates & \multicolumn{1}{|c|}{\multirow{2}{*}{40}} & \multicolumn{5}{|c|}{\multirow{2}{*}{20}} \\
        & per environment step & 
        \multicolumn{1}{|c|}{}     & \multicolumn{5}{|c|}{} \\
    \cline{2-8}
        \multirow{3}{*}{$k$} & \multirow{3}{*}{model horizon} & \multicolumn{3}{|c|}{\multirow{3}{*}{1}} & 
        $1 \rightarrow 25$ & 
        $1 \rightarrow 15$ &
        $1 \rightarrow 25$ \\
    & & \multicolumn{3}{|c|}{} & 
        over epochs & 
        over epochs&
        over epochs\\
    & & \multicolumn{3}{|c|}{} & 
        $20 \rightarrow 100$ & 
        $20 \rightarrow 100$ &
        $20 \rightarrow 300$ \\
    \cline{2-8}
\end{tabular}
\vspace{.2cm}
\caption{Hyperparameter settings for MBPO results shown in \autoref{fig:results}. $x \rightarrow y$ over epochs $a \rightarrow b$ denotes a thresholded linear function, \emph{i.e.} at epoch $e$, $f(e) = \min(\max(x+\frac{e-a}{b-a} \cdot (y-x), x), y)$}
\label{table2}
\end{table}

\end{document}